\renewenvironment{equation*}{\[}{\]\ignorespacesafterend}
\newcommand{\nc}{\newcommand}
\nc{\ind}{\mathds{1}}
\newcommand{\R}{\mathbb{R}}
\newcommand{\E}{\mathcal{E}}
\def\esssup_#1{\underset{#1}{\mathrm{ess\,sup\, }}}
\def\essinf_#1{\underset{#1}{\mathrm{ess\,inf\, }}}
\def\argmax_#1{\underset{#1}{\mathrm{arg\,max\, }}}
\def\argmin_#1{\underset{#1}{\mathrm{arg\,min\, }}}
\def\reff#1{{\rm(\ref{#1})}}
\def \Sum{\displaystyle\sum}
\def\b1{\bf 1}
\def \R{\mathbb{R}}
\def \E{\mathbb{E}}
\def \P{\mathbb{P}}
\def \Bc{{\cal B}}
\def \Cc{{\cal C}}
\def \Hc{{\cal H}}
\def \Pc{{\cal P}}
\def \Sc{{\cal S}}
\def \Uc{{\cal U}}
\def \Xc{{\cal X}}
\def\reff#1{{\rm(\ref{#1})}}
\def\beq{\begin{eqnarray}}
\def\enq{\end{eqnarray}}
\author{Haotian Gu\thanks{Department of Mathematics, University of California, Berkeley, USA (\email{haotian\_gu@berkeley.edu}).}
\and Xin Guo\thanks{IEOR Department, University of California, Berkeley, USA (\email{xinguo@berkeley.edu}, \email{xiaoliwei@berkeley.edu}).}
\and Xiaoli Wei\footnotemark[3]
\and Renyuan Xu\thanks{Industrial  \& Systems Engineering, University of Southern California, Los Angeles, USA (\email{renyuanx@usc.edu}).}}
\title{Mean-Field Controls with Q-learning for Cooperative MARL:  Convergence and Complexity Analysis\thanks{Accepted August 16th, SIAM Journal on Mathematics of Data Science.
}}
\begin{document}

\maketitle

\begin{abstract}
Multi-agent reinforcement learning (MARL), despite its popularity and empirical success,  suffers from the curse of dimensionality. This paper  builds the mathematical framework to approximate cooperative MARL by  a mean-field control (MFC) approach, and shows that the approximation error is of $\mathcal{O}(\frac{1}{\sqrt{N}})$. By establishing an appropriate form of the dynamic programming principle for both the value function and the Q function, it proposes a model-free kernel-based Q-learning algorithm (MFC-K-Q), which is shown to have a linear convergence rate for the MFC problem, the first of its kind in the  MARL literature. It further  establishes that the convergence rate and the sample complexity of MFC-K-Q are independent of the number of agents $N$, which provides an $\mathcal{O}(\frac{1}{\sqrt{N}})$ approximation to the MARL problem with $N$ agents in the learning environment.
Empirical studies for the network traffic congestion problem demonstrate that MFC-K-Q outperforms existing  MARL algorithms when $N$ is large, for instance when $N>50$. 
\end{abstract}

\begin{keywords}
  Mean-Field Control, Multi-Agent Reinforcement Learning, Q-Learning, Cooperative Games, Dynamic Programming Principle.
\end{keywords}

\begin{AMS}
  49N80, 68Q32, 68T05, 90C40
\end{AMS}

\section{Introduction}
Multi-agent reinforcement learning (MARL) has enjoyed substantial successes for analyzing the otherwise challenging games, including two-agent or two-team computer games \cite{SHMGSVSAPL2016,VBCMJCDHGP2019}, self-driving vehicles \cite{SSS2016}, real-time bidding games \cite{JSLGWZ2018}, ride-sharing \cite{LQJYWWWY2019}, and traffic routing \cite{EAA2013}.
 Despite its empirical success, MARL suffers from the curse of dimensionality known also as the {\it combinatorial nature} of MARL: its sample complexity by  existing algorithms for stochastic dynamics grows exponentially with respect to the number of agents $N$.
(See \cite{HPT2019} and also Proposition \ref{lemma:N_agent_complexity} in Section \ref{sec:set-up}). In practice, this $N$ can be on the scale of thousands or more, for instance, in rider match-up for Uber-pool and network routing for Zoom.
 
One classical approach to tackle this curse of dimensionality is to focus on {\it local policies}, namely by exploiting special structures of MARL problems and by designing problem-dependent algorithms to reduce the complexity. For instance,  \cite{LR2000}  developed value-based distributed Q-learning algorithm for deterministic and finite Markov decision problems (MDPs), and \cite{QWL2019} exploited special dependence structures among agents.
(See the reviews by  \cite{yang2020overview} and \cite{ZYB2019}  and the references therein).

Another approach is to consider  MARL in the regime with a large number of homogeneous agents.  In this paradigm, by  functional strong law of large numbers (a.k.a. propagation of chaos)  \cite{K1956, M1967, S1991, G1988}, non-cooperative MARLs can be approximated under Nash equilibrium by  mean-field games with learning, and  cooperative  MARLs can be studied under Pareto optimality by analyzing mean-field controls (MFC) with learning. 
This approach is appealing not only because  the dimension of MFC or MFG is independent of the number of agents $N$, but also because solutions of MFC/MFG (without learning) have been shown to
provide good approximations to the corresponding $N$-agent  game in terms of both game values and optimal strategies \cite{huang2007large,lasry2007mean,MP2019,SBM2018,saldi2020approximate}. 

MFG with learning has gained popularity in the reinforcement learning (RL) community \cite{fu2019,GHXZ2019, iyer2014mean,yang2018mean,yin2013learning}, with its sample complexity  shown to be similar to that of single-agent RL (\cite{fu2019, GHXZ2019}). Yet MFC with learning is by and large an uncharted field despite its potentially wide range of applications \cite{LQJYWWWY2019,lin2018efficient,wang2016multi,wiering2000multi}. The main challenge for MFC with learning is to deal with probability measure space
over the state-action space, which  
 is shown (\cite{GGWR2019}) to be the minimal space for which the Dynamic Programming Principle  will hold.
One of the open problems  for MFC with learning is therefore, as pointed out in \cite{MP2019},  to design efficient RL algorithms  on probability measure space.
 
To circumvent designing algorithms  on probability measure space, \cite{CLT2019b} proposed to add common noises to the underlying dynamics. This approach enables them to apply the standard RL theory for stochastic dynamics. Their  model-free algorithm, however, suffers from high sample complexity as illustrated in Table \ref{table:algo} below, and  with weak performance as  demonstrated in Section \ref{sec:experiments}. For special classes of linear-quadratic MFCs with stochastic dynamics, \cite{CLT2019a} explored the policy gradient method and \cite{LYWK2019}  developed an actor-critic type algorithm.

\paragraph{Our work}  This paper  builds the mathematical framework to approximate cooperative MARL by  MFCs with learning. The approximation error is shown to be of  $\mathcal{O}(\frac{1}{\sqrt{N}})$. It then identifies the minimum space on which the Dynamic Programming Principle holds, and proposes an efficient approximation algorithm (MFC-K-Q) for MFC with learning.
This model-free Q-learning-based algorithm combines the technique of kernel regression with approximated Bellman operator. The convergence rate and the sample complexity of this algorithm are shown to be independent of the number of agents
$N$, and  rely only on the size of the state-action space of the underlying single-agent dynamics (Table \ref{table:algo}). 
As far as we are aware of, there is no prior algorithm with linear convergence rate for cooperative  MARL. 

Mathematically, the DPP is established through lifting the state-action space and by aggregating the
reward and the underlying dynamics. This lifting idea has been used in previous  MFC framework (\cite{PW2016, WZ2020} without learning and \cite{GGWR2019} with learning). Our work finds  that this lifting idea is  critical for efficient algorithm design for MFC with learning: the resulting deterministic dynamics from this lifting trivialize the choice of the learning rate for the convergence analysis and significantly reduce the sample complexity.

Our experiment in Section \ref{sec:experiments} demonstrates that MFC-K-Q avoids the curse of dimensionality and outperforms both existing MARL algorithms (when $N>50$) and the MFC algorithm in \cite{CLT2019b}. 
Table \ref{table:algo} summarizes the  complexity of
our MFC-K-Q algorithm along with these relevant algorithms.
 
\begin{table}[tbhp]
  \centering
  \begin{threeparttable}
  \begin{tabular}{lllll}
    \toprule
    Work & MFC/N-agent & Method & Sample Complexity Guarantee\\
    \midrule
    Our work & MFC & Q-learning & $\Omega(T_{cov}\cdot\log(1/\delta))$\\
    \cite{CLT2019b} & MFC & Q-learning & $\Omega((T_{cov}\cdot\log(1/\delta))^l\cdot\text{poly}(\log(1/(\delta\epsilon))/\epsilon))$\\
    Vanilla N-agent & N-agent & Q-learning & $\Omega(\text{poly}((|\Xc||\Uc|)^N\cdot\log(1/(\delta\epsilon))\cdot N/\epsilon))$\\
    \cite{QWL2019} & N-agent & Actor-critic & $\Omega(\text{poly}((|\Xc||\Uc|)^{f(\log(1/\epsilon))}\cdot\log(1/\delta)\cdot N/\epsilon))$\\
    \bottomrule
  \end{tabular}
  \caption{Comparison of algorithms}  \label{table:algo}
  \end{threeparttable}
   \begin{tablenotes}
  \item $T_{cov}$ in Table \ref{table:algo} is the covering time of the exploration policy and $l=\max\{3+1/\kappa, 1/(1-\kappa)\}>4$ for some $\kappa\in(0.5,1)$. Other parameters are as in Proposition \ref{lemma:N_agent_complexity} and also in Theorem \ref{thm:sample_complexity}. Note that~\cite{QWL2019} assumed that agents interact locally through a given graph so that local policies can approximate the global one, yet $f(\log(1/\epsilon))$  can scale as $N$ for a dense graph.
  \end{tablenotes}
 
\end{table}

\paragraph{Organizations}  Section \ref{sec:set-up} introduces the set-up of cooperative MARL and MFC with learning. 
Section \ref{sec:dpp} establishes the Dynamical Programming Principle for MFC with learning. Section \ref{sec:algorithm} proposes the algorithm (MFC-K-Q) for MFC with learning, with convergence and sample complexity analysis. Section \ref{sec:mfc-theorey} is dedicated to the proof of the main theorem.  Section \ref{sec:connection} connects cooperative MARL and MFC with learning. Section \ref{sec:experiments} tests performance of MFC-K-Q in a network congestion control problem.  Finally, some future directions and discussions are provided in Section \ref{sec:discussion}. For ease of exposition, proofs for all lemmas are in the Appendix.

\paragraph{Notation} For a measurable space $(\Sc, \Bc)$, where $\Bc$ is $\sigma$-algebra on $\Sc$, denote $\mathbb{R}^{\Sc}$ for the set of all  real-valued measurable functions on $\Sc$, $\mathbb{R}^{\Sc}:=\{ f:\Sc\to\mathbb{R}  |  f \text{ is measurable}\}$. For each bounded $f\in\mathbb{R}^{\Sc}$, define the sup norm of $f$ as $||f||_\infty=\sup_{s\in\Sc}|f(s)|$. In addition, when $\Sc$ is finite, we denote $|\Sc|$ for the size of $\Sc$, and $\Pc(\Sc)$ for the set of all probability measures on $\Sc$: $\{p:p(s)\geq 0, \sum_{s\in\Sc}p(s)=1\}$, which is equivalent to the probability simplex in $\mathbb{R}^{|\Sc|}$. Moreover, in $\Pc(\Sc)$, let $d_{\Pc(\Sc)}$ be the metric induced by the $l_1$ norm: for any $u, v\in\Pc(\Sc)$, $d_{\Pc(\Sc)}(u,v)=\sum_{s\in\Sc}|u(s)-v(s)|$.  $\Pc(\Sc)$ is endowed with Borel $\sigma$-algebra induced by $l_1$ norm. $1(x \in A)$ denotes the indicator function, i.e., $1(x \in A) = 1$ if $x \in A$, and $1(x \notin A) = 0$ if $x \notin A$. 
 
\section{MARL and MFC with Learning}\label{sec:set-up}
\subsection{MARL and its Complexity}\label{subsec:nplayer}

We first recall cooperative MARL in an infinite time horizon, where there are $N$ agents whose game strategies are coordinated by a central controller. Let us assume the state space $\mathcal{X}$ and the action space $\mathcal{U}$ are all finite.

At each step $t=0,1, \cdots,$ the state of agent $j$ $(= 1, 2, \cdots , N)$ is $x_t^{j, N} \in \mathcal{X}$ and she takes an action $u_t^{j, N} \in \mathcal{U}$.  Given the current state profile $\pmb{x}_t = (x_t^{1, N},\cdots,x_t^{N, N})\in \mathcal{X}^N$ and the current action profile $\pmb{u}_t = (u_t^{1, N},\cdots,u_t^{N, N})\in \mathcal{U}^N$ of $N$-agents, agent $j$ will receive a reward $\tilde r^j({\pmb x_t}, {\pmb u_t})$ and her state will change to $x_{t + 1}^{j, N}$ according to a transition probability function $P^j({\pmb x_t}, {\pmb u_t})$. A Markovian game further restricts the admissible policy for agent $j$ to be of the form $u_t^{j, N} \sim \pi_t^j(\pmb{x}_t)$. That is, $\pi_t^j:  \mathcal{X}^N \rightarrow \mathcal{P}(\mathcal{U})$ maps each state profile $\pmb{x}\in \mathcal{X}^N$ to a randomized action, with $\mathcal{P}(\mathcal{U} )$ the probability measure space on space $\mathcal{U}$.

In this cooperative MARL, the central controller is to maximize the expected discounted aggregated accumulated rewards over all  policies and averaged over all agents. That is  to find 

\begin{eqnarray*} 
\sup_{{\pmb \pi}}\frac{1}{N} \sum_{j=1}^N v^j({\pmb x}, {\pmb \pi}),\text{  where }v^j({\pmb x}, {\pmb \pi}) = \mathbb{E} \biggl[\sum_{t=0}^\infty \gamma^t \tilde r^j({\pmb x_t}, {\pmb u_t}) \big| {\pmb x_0} = {\pmb x}\biggl]
\end{eqnarray*}
is the accumulated reward for agent $j$, given the initial state profile $\pmb{x}_0 = \pmb{x}$ and policy ${\pmb \pi} = \{{\pmb \pi}_t\}_{t = 0}^\infty$ with ${\pmb \pi}_t=(\pi_t^1, \ldots, \pi_t^N)$. Here 
 $\gamma$ $\in$ $(0, 1)$ is a discount factor, {$u_t^{j, N} \sim \pi_t^j(\pmb{x}_t)$, and $x_{t+1}^{j, N}\sim P^j({\pmb x_t}, {\pmb u_t})$}. 

 The sample complexity of the Q learning algorithm of this cooperative MARL is  exponential with respect to $N$. Indeed, take Theorem $4$ in~\cite{even2003learning} and note that the corresponding covering time for the policy of the central controller will be at least $(|\Xc||\Uc|)^N$, then we see 
\begin{Proposition}\label{lemma:N_agent_complexity}
Let $|\Xc|$ and $|\Uc|$ be respectively the size of the state space $\Xc$ and the action space $\Uc$. Let $Q^*$ and  $Q_T$ be respectively the optimal value and the value of the asynchronous Q-learning algorithm in~\cite{even2003learning} using polynomial learning rate with time $T=\Omega\bigg(\text{poly}\bigg((|\Xc||\Uc|)^N\cdot\frac{N}{\epsilon}\cdot\ln(\frac{1}{\delta\epsilon})\bigg)\bigg)$. Then with probability at least $1-\delta$, $\|Q_T-Q^*\|_\infty \leq\epsilon$.
\end{Proposition}

This exponential growth in sample complexity makes the algorithm difficult to scale up. 
The classical approach for this curse of dimensionality is to explore  special network structures (e.g., sparsity or local interactions among agents) for MARL problems. 
Here we shall  propose an alternative approach in the regime when  there is a large number of homogeneous agents.

\subsection{MFC with Learning: Set-up, Assumptions and Some Preliminary Results}
To overcome the curse of dimensionality in $N$, we now propose a   mean-field control (MFC) framework to approximate this cooperative MARL when agents are homogeneous. 

In this MFC framework, all agents are assumed to be identical, indistinguishable, and interchangeable, and each agent $j (=1, \cdots, N)$  is assumed to depend on all other agents only through the empirical distribution of their states and actions. 
That is, denote $\mathcal{P}(\mathcal{X})$ and $\mathcal{P}(\mathcal{U})$ as the probability measure spaces over the state space $\mathcal{X}$ and the action space $\mathcal{U}$, respectively. The empirical distribution of the states is $\mu^{N}_t(x)$ $=$ $\frac{ \sum_{j=1}^N 1({x^{j, N}_t} = x)}{N}$ {{$\in \Pc(\Xc)$}},  and the empirical distribution of the actions is $\nu^{N}_t(u)$ $=$ $\frac{\sum_{j=1}^N 1({u^{j, N}_t}=u)}{N} ${{$\in \Pc(\Uc)$}}.
Then, by law of large numbers, this coperative MARL becomes an MFC with learning  when $N \rightarrow \infty$. 
Moreover, as all agents are indistinguishable, one can focus on a single representative agent.

Mathematically,  this MFC  with learning is as follows. At each time $t=0,1, \cdots,$ the representative agent in state $x_t$ takes an action $u_t$ $\in$ $\mathcal{U}$ according to the admissible policy $\pi_t(x_t,\mu_t): \mathcal{X} \times \mathcal{P}(\mathcal{X}) \to \mathcal{P}(\mathcal{U})$ assigned by the central controller, who can observe the population state distribution $\mu_t \in \mathcal{P}(\mathcal{X})$. Further denote $\Pi:=\{\pi=\{\pi_t\}_{t=0}^{\infty}\vert \pi_t:\mathcal{X} \times \mathcal{P}(\mathcal{X}) \to \mathcal{P}(\mathcal{U}) \text{ is measurable}\}$ as the set of admissible policies. The agent will then receive a reward $\tilde r(x_t, \mu_t, u_t, \nu_t)$ and move to the next state $x_{t + 1}$ $\in$ $\mathcal{X}$ according to a probability transition function $P(x_t, \mu_t, u_t, \nu_t)$. Here $P$ and $\tilde r$ rely on the state distribution $\mu_t$ and the action distribution $\nu_t(\cdot):=\sum_{x \in \mathcal{X}} \pi_t(x, \mu_t)(\cdot)\mu_t(x)$, and are possibly unknown.

The objective for this MFC  with learning is  to find $v$ the maximal expected discounted accumulated reward  over all admissible  policies $\pi = \{\pi_t\}_{t=0}^\infty$, namely
\begin{linenomath}
\begin{align}  \label{mfc_objective_2}
v(\mu)&=\sup_{\pi \in \Pi} v^\pi(\mu)  : =\sup_{\pi \in \Pi}  \mathbb{E}\biggl[ \sum_{t = 0}^\infty \gamma^t \tilde r(x_t, \mu_t, u_t, \nu_t) \left| x_0 \sim \mu \biggl],\right. \tag{MFC}\\
\text {subject to}\ \  & \; x_{t + 1} \sim P(x_t, \mu_t, u_t, \nu_t),\;\;\;  u_t \sim \pi_t(x_t, \mu_t). \nonumber
\end{align}
\end{linenomath}
with initial condition $\mu_0=\mu$.


Note that after observing $\mu_t$, the policy from the central controller $\pi_t(\cdot,\mu_t)$ can be viewed as a mapping from $\Xc$ to $\Pc(\Uc)$. In this case, we set 
\begin{equation} \label{localpolicyh}
 h_t(\cdot): = \pi_t(\cdot, \mu_t)
\end{equation}
for notation simplicity and denote $\Hc:=\{h:\Xc\to\Pc(\Uc)\}$  as the space for $h_t(\cdot)$. Note that  $\mathcal{H}$ is isomorphic to the product of $|\Xc|$ copies of $\Pc(\Uc)$. Therefore, the set of admissible policies $\Pi$ can be rewritten as  
\beq \label{admissiblePi}
\Pi: = \Big\{\pi=\{\pi_t\}_{t=0}^\infty \,|\, \pi_t:\Pc(\Xc)\to\Hc \text{ is measurable}\Big\}.
\enq 
This reformulation of the admissible policy set is key for  deriving the Dynamic Programming Principle (DPP) of \reff{mfc_objective_2}: it enables us to show that the objective in \reff{mfc_objective_2} is law-invariant and the probability distribution of the dynamics in \reff{mfc_objective_2} satisfies flow property.  This flow property is also crucial for establishing the convergence of the associated cooperative MARL by \reff{mfc_objective_2}.

\begin{Lemma}  \label{lemma:flowmut} 
Under any admissible policy $\pi = \{\pi_t\}_{t=0}^\infty \in \Pi$, and the initial state distribution $x_0\sim\mu_0=\mu$, the evolution of the state distribution $\{\mu_t\}_{t\geq0}$,  is given by
\beq \label{eqv:flowmut}
\mu_{t + 1}  &=& \Phi(\mu_t, h_t),
\enq
where $h_t(\cdot)$ is defined in \reff{localpolicyh}  and the dynamics $\Phi$ is defined as
\beq \label{equ:Phi}
\Phi(\mu, h) &:=& \sum_{x \in \mathcal{X}}\sum_{u \in \mathcal{U}} P(x, \mu, u, {\nu}(\mu, h)) \mu(x) h(x)(u) \in \Pc(\Xc),
\enq
for any $(\mu, h) \in \Pc(\Xc) \times \Hc$ and ${\nu}(\mu, h)(\cdot) := \sum_{x \in \mathcal{X}} h(x)(\cdot)\mu(x) \in \Pc(\Uc)$. Moreover,  the value function $v^{\pi}$ defined in \reff{mfc_objective_2} can be rewritten as 
\beq \label{equ:reformv}
v^{\pi}(\mu) &=& \sum_{t=0}^{\infty} \gamma^{t} r(\mu_t, h_t),
\enq
where for any $(\mu, h) \in \Pc(\Xc) \times \Hc$, the reward $r$ is defined as
\beq \label{equ: r}
r(\mu, h)&:=&\sum_{x \in \mathcal{X}} \sum_{u \in \mathcal{U}} \tilde r(x, \mu, u, {\nu}(\mu, h)) \mu(x) h(x)(u).
\label{equ:r}
\enq
\end{Lemma}
\begin{Remark}
\label{remark:aggregation}
Because of the aggregated forms of $\Phi$ and $r$ from \reff{equ:Phi} and \reff{equ: r}, they are also called the aggregated dynamics and the aggregated reward, respectively.

\end{Remark}


We start with some
standard regularity assumptions for  MFC problems \cite{CD2018}. These assumptions are necessary  for  the  mean-field approximation to cooperative MARL and  for the subsequent convergence and sample complexity analysis of the learning algorithm.


Let us use the $l_1$ distance for the metrics $d_{\Pc(\Xc)}$ and $d_{\Pc(\Uc)}$ of $\Pc(\Xc)$ and $\Pc(\Uc)$, and define  $d_\Hc(h_1,h_2)=\max_{x\in\Xc}||h_1(x)-h_2(x)||_1$ and $d_\Cc((\mu_1,h_1),(\mu_2,h_2))=||\mu_1-\mu_2||_1+d_\Hc(h_1,h_2)$ for the space  $\Hc$ and $\Cc:= \Pc(\Xc) \times \Hc$, respectively. Moreover, we endow $\Cc$ with Borel $\sigma$ algebra generated by open sets in $d_{\Cc}$.

\begin{Assumption}[Continuity and boundedness of $\tilde{r}$]\label{ass:r_MFC}
    There exist $\tilde{R}>0, L_{\tilde r}>0$, such that for all $x\in\Xc,u\in\Uc$, $\mu_1,\mu_2\in\Pc(\Xc), \nu_1,\nu_2\in\Pc(\mathcal{U})$,
    \begin{eqnarray*}
    |\tilde{r}(x,\mu_1,u,\nu_1)|\leq\tilde{R}, \; |\tilde{r}(x,\mu_1,u,\nu_1)-\tilde{r}(x,\mu_2,u,\nu_2)|\leq L_{\tilde{r}}\cdot(||\mu_1-\mu_2||_1+||\nu_1-\nu_2||_1).
    \end{eqnarray*}
\end{Assumption}

\begin{Assumption}[Continuity of $P$]\label{ass:P_MFC}
    There exists $L_P>0$ such that for all $x\in\Xc, u\in\Uc, \mu_1,\mu_2\in\Pc(\Xc), \nu_1,\nu_2\in\Pc(\Uc),$
    
   $ ||P(x,\mu_1,u,\nu_1)-P(x,\mu_2,u,\nu_2)||_1\leq L_{P} \cdot (||\mu_1-\mu_2||_1+||\nu_1-\nu_2||_1).$
\end{Assumption}

 Note that $l_1$ distance between transition kernels $P(x, \mu, u, \nu)$ in Assumption \ref{ass:P_MFC} is equivalent to 1-Wasserstein distance  when $\Xc$ and $\Uc$ are equipped with discrete metrics $1{(x_1 \neq x_2)}$ for $x_1, x_2 \in \Xc$ and $1{(u_1 \neq u_2)}$ for $u_1, u_2 \in \Uc$, respectively, see e.g., \cite{GS2002}, \cite{Hinderer2005}. Under Assumptions \ref{ass:r_MFC} and \ref{ass:P_MFC}, it is clear that the probability measure  $\nu$ over the action space, the aggregated reward $r$ in \reff{equ:r}, and the aggregated dynamics $\Phi$ in \reff{equ:Phi} are all Lipschitz continuous, which  will be useful for subsequent analysis. 
\begin{Lemma}[Continuity of $\nu$]\label{lemma:cont_nu}
\begin{equation}
    \|\nu(\mu,h)-\nu(\mu',h')\|_1\leq d_\Cc((\mu,h),(\mu',h')).
\end{equation}
\end{Lemma}

\begin{Lemma}[Continuity of $r$]\label{lemma:cont_r}
Under Assumption~\ref{ass:r_MFC},
\begin{equation}
    |r(\mu,h)-r(\mu',h')|\leq (\tilde{R}+2L_{\tilde{r}})d_\Cc((\mu,h),(\mu',h')).
\end{equation}
\end{Lemma}

\begin{Lemma}[Continuity of $\Phi$]\label{lemma:cont_phi}
Under Assumption~\ref{ass:P_MFC},
\begin{equation}
    \|\Phi(\mu,h)-\Phi(\mu',h')\|_1\leq(2L_{P}+1)d_\Cc((\mu,h),(\mu',h')).
\end{equation}
\end{Lemma}

\section{DPP for Q Function in MFC with learning}\label{sec:dpp}
In this section, we establish the DPP of the Q function for \reff{mfc_objective_2}. Different  from the well-understood DPP for single-agent control problem  (see for example \cite[chapter 9]{meyn2008control} and \cite{meyn1999algorithms}), DPP for mean-field control problem has been established only recently on the lifted probability measure space
\cite{GGWR2019,PW2016,WZ2020}. We extend the approach of \cite{GGWR2019} to allow $P$ and $\tilde{r}$ to depend on the population's action distribution $\nu_t$.

First, by Lemma \ref{lemma:flowmut}, \reff{mfc_objective_2}   can be recast as a general Markov decision  problem (MDP)
with probability measure space as the new state-action space. More specifically, recall the set of admissible policies $\Pi$ in \reff{admissiblePi}, if one views the policy $\pi_t$ to be a mapping from $\Pc(\Xc)$ to $\Hc$,  then  \reff{mfc_objective_2}  can be restated as the following MDP with unknown  $r$ and $\Phi$:
\begin{linenomath}
\begin{align}\label{mfc_objective_1}
 & v(\mu) := \sup_{\pi\in\Pi}\sum_{t=0}^{\infty} \gamma^{t} r(\mu_t, h_t) \tag{MDP}\\
\text {subject to} \ \ \ \ &
   \mu_{t + 1}=\Phi(\mu_t, h_t), \ \ \mu_0=\mu,  {\rm and }\,\,h_t(\cdot) \text{ in \reff{localpolicyh}}.\nonumber
\end{align}
\end{linenomath}
With this reformulation, we can  define the associated optimal Q function for  \eqref{mfc_objective_1} starting from arbitrary $(\mu, h) \in \Cc = \Pc(\Xc) \times \Hc$,
\beq \label{equmfcQ}
Q(\mu, h) := \sup_{\pi\in\Pi}\bigg[\sum_{t=0}^{\infty} \gamma^{t} r(\mu_t, h_t)\bigg|\mu_0=\mu, \pi_0(\mu_0)=h\bigg],
\enq
with $h_t(\cdot)$ defined in \reff{localpolicyh}. Similarly, define $Q^\pi$ as the Q function associated with a  policy $\pi$:
\beq \label{equ:Q_pi}
Q^\pi(\mu, h) := \bigg[\sum_{t=0}^{\infty} \gamma^{t} r(\mu_t, h_t)\bigg|\mu_0=\mu, \pi_0(\mu_0)=h\bigg],
\enq
with $h_t(\cdot)$ defined in \reff{localpolicyh}.
\begin{Remark}
With this reformulation,  \reff{mfc_objective_2} is now lifted from the finite state-action space $\cal X$ and $\cal U$ to a compact continuous state-action space $\Cc$ embedded in an Euclidean space. In addition, the dynamics become {\it deterministic} by the aggregation over the original state-action space.
Due to this aggregation for $r$, $\Phi$, and the Q function, we will subsequently refer this Q in \reff{equmfcQ}  as an Integrated Q (IQ) function, to underline the difference between the Q function for RL of single agent and that for MFC with learning. 

\end{Remark}

The following theorem shows Bellman equation for the IQ function in \reff{equmfcQ}.
\begin{Theorem}\label{thm:MKC=MDP}
For any $\mu$ $\in$ $\Pc(\Xc)$,  
\beq \label{equ:relationvQ}
v(\mu) &=& \sup_{h \in \Hc} Q(\mu, h) = \sup_{h \in \Hc}\sup_{\pi\in\Pi}Q^\pi(\mu, h).
\enq
Moreover, the Bellman equation for  $Q$ $: \Cc \to \R$ is
\begin{equation} \label{BellmanQ}
Q(\mu, h) = {r}(\mu, h) + \gamma \sup_{\tilde h \in \Hc} Q(\Phi(\mu, h), \tilde h).
\end{equation}
\end{Theorem}
\begin{proof}[Proof of Theorem \ref{thm:MKC=MDP}]
Recall the definition of $v$ in \eqref{mfc_objective_1} and $Q$ in \eqref{equmfcQ}. For $v(\mu)$, the supremum is taken over all the admissible policies $\Pi$, while for $Q(\mu, h)$, the supremum is taken over all the admissible policies $\Pi$ with a further restriction that $\pi_0(\mu)=h$. Now in $\sup_{h \in \Hc} Q(\mu, h)$, since we are free to choose $h$, it is equivalent to $v$. Moreover, 
\begin{eqnarray*}
v(\mu) &=& \sup_{\pi\in\Pi} \biggl[ \sum_{t=0}^{\infty} \gamma^{t} r(\mu_t, \pi_t(\mu_t))\bigg|\mu_0 = \mu\biggl]= \sup_{\pi\in\Pi, \pi_0(\mu)=h, h\in\Hc}\bigg[\sum_{t=0}^{\infty} \gamma^{t} r(\mu_t, \pi_t(\mu_t))\bigg|\mu_0=\mu, \pi_0(\mu_0)=h\bigg]\\
&=& \sup_{h\in\Hc}\sup_{\pi\in\Pi, \pi_0(\mu)=h}\bigg[\sum_{t=0}^{\infty} \gamma^{t} r(\mu_t, \pi_t(\mu_t))\bigg|\mu_0=\mu, \pi_0(\mu_0)=h\bigg]= \sup_{h \in \Hc} Q(\mu, h).
\end{eqnarray*}

\begin{eqnarray*}
Q(\mu, h) &=& \sup_{\pi\in\Pi}\bigg[\sum_{t=0}^{\infty} \gamma^{t} r(\mu_t, \pi_t(\mu_t))\bigg|\mu_0=\mu, \pi_0(\mu_0)=h\bigg] \\
&=& r(\mu, h) + \sup_{\{\pi_t\}_{t=1}^\infty}\bigg[\sum_{t=1}^{\infty} \gamma^{t} r(\mu_t, \pi_t(\mu_t))\bigg|\mu_1=\Phi(\mu, h)\bigg] \\
&=& r(\mu, h) + \sup_{\{\pi_t\}_{t=0}^\infty}\gamma\bigg[\sum_{t=0}^{\infty} \gamma^{t} r(\mu_t, \pi_t(\mu_t))\bigg|\mu_0=\Phi(\mu, h)\bigg] \\
&= &r(\mu, h) + \gamma v(\Phi(\mu, h))= r(\mu, h) + \gamma \sup_{h \in \Hc} Q(\Phi(\mu, h), h),
\end{eqnarray*}
where the third equality is from shifting the time index by one.
\end{proof}

Next, we have the following  verification theorem for this  IQ function.
\begin{Proposition}[Verification]\label{prop:veri}
Assume Assumption \ref{ass:r_MFC} and define $V_{\max}:=\frac{R}{1-\gamma}$. Then,
\begin{itemize}
\setlength{\itemindent}{.5in}
    \item $Q$ defined in \eqref{equmfcQ} is the unique function in $\{f\in\R^{\Cc}:\|f\|_\infty\leq V_{\max}\}$ satisfying the Bellman equation \eqref{BellmanQ}. 
    \item Suppose that for every $\mu$ $\in$ $\Pc(\Xc)$, one can find an $h_\mu\in\Hc$ such that $h_\mu \in \arg\max_{h\in\Hc}Q(\mu, h)$, then $\pi^*=\{\pi_t^*\}_{t=0}^\infty$, where $\pi_t^*(\mu)=h_\mu$ for any $\mu\in\Pc(\Xc)$ and $t\geq0$, is an optimal stationary policy of  \reff{mfc_objective_1}.
\end{itemize}
\end{Proposition}

In order to prove the proposition, let us first define the following two operators.
\begin{itemize}
    \item Define the operator $B:\mathbb{R}^{\Cc}\to\mathbb{R}^{\Cc}$ for  \eqref{mfc_objective_1}
    \begin{equation}\label{equ:bellman_original}
        ({B}\,q)(c)={r}(c)+\gamma\max_{\tilde{h}\in\Hc}q({\Phi}(c),\tilde{h}).
    \end{equation}
    \item Define the operator $B^\pi:\mathbb{R}^{\Cc}\to\mathbb{R}^{\Cc}$ for \eqref{mfc_objective_1} under a given stationary policy $\{\pi_t=\pi:\Pc(\Xc)\to\Hc\}_{t=0}^\infty$
    \begin{equation}\label{equ:bellman_pi}
        ({B^\pi}\,q)(c)={r}(c)+\gamma q({\Phi}(c),\pi(\Phi(c))).
    \end{equation}
\end{itemize}
\begin{proof}
     Since $||\tilde r||_\infty\leq R$, for any $\mu\in\Pc(\Xc)$ and $h\in\Hc$, the aggregated reward function \eqref{equ:r} satisfies $|r(\mu, h)|\leq R\cdot\sum_{x\in\Xc}\sum_{u\in\Uc}\mu(x)h(x)(u)=R.$
    In this case, for any $\mu\in\Pc(\Xc)$, $h\in\Hc$ and policy $\pi$, $|Q^\pi(\mu, h)|\leq R\cdot\sum_{t=0}^\infty \gamma^t=V_{\max}$. Hence, $Q$ of \eqref{equmfcQ} and $Q^\pi$ of \eqref{equ:Q_pi} both belong to $\{f\in\R^{\Cc}:\|f\|_\infty\leq V_{\max}\}$. Meanwhile, by definition, it is easy to show that $B$ and $B^\pi$ map $\{f\in\R^{\Cc}:\|f\|_\infty\leq V_{\max}\}$ to itself.
    
    Next, we notice that $B$ is a contraction operator with modulus $\gamma<1$ under the sup norm on $\{f\in\R^{\Cc}:\|f\|_\infty\leq V_{\max}\}$: for any $(\mu,h)\in\Cc$,
    \begin{eqnarray*}
       |B q_1(\mu,h)-B q_2(\mu,h)| \leq \gamma \max_{\tilde{h}\in\Hc} |q_1(\Phi(\mu,h),\tilde{h})-q_2(\Phi(\mu,h),\tilde{h})| \leq \gamma \|q_1-q_2\|_\infty.
    \end{eqnarray*}
     Thus, $\|B q_1-B q_2\|_\infty\leq\gamma \|q_1-q_2\|_\infty$. By Banach Fixed Point Theorem, $B$ has a unique fixed point in $\{f\in\R^{\Cc}:\|f\|_\infty\leq V_{\max}\}$. By \eqref{BellmanQ} in Theorem \ref{thm:MKC=MDP}, the unique fixed point is $Q$.
    
    Similarly, we can show that for any stationary policy $\pi$, $B^\pi$ is also a contraction operator with modulus $\gamma<1$. Meanwhile, by the standard DPP argument as in Theorem \ref{thm:MKC=MDP}, we have $Q^{\pi}=B^{\pi}Q^{\pi}$. This implies $Q^{\pi}$ is the unique fixed point for $B^{\pi}$ in $\{f\in\R^{\Cc}:\|f\|_\infty\leq V_{\max}\}$.
    
    Now let $\pi^*$ be the stationary policy defined in the statement of Proposition \ref{prop:veri}. By definition, for any $c\in\Cc$, $Q(c)=r(c)+\gamma\max_{\tilde{h}\in\Hc}Q(\Phi(c),
    \tilde{h})=r(c)+\gamma Q(\Phi(c),
    \pi^*(\Phi(c)))=B^{\pi^*}Q(c)$.
    Since $B^{\pi^*}$ has a unique fixed point $Q^{\pi^*}$ in $\{f\in\R^{\Cc}:\|f\|_\infty\leq V_{\max}\}$, which is the IQ function for the stationary policy $\pi^*$, clearly $Q^{\pi^*}=Q$, and the optimal IQ function is attained by the optimal policy $\pi^*$.
\end{proof}

\begin{lemma}[Characterization of  $Q$]\label{continuity-Qc}
Assume Assumptions~\ref{ass:r_MFC} and ~\ref{ass:P_MFC}, and $\gamma\cdot (2L_P + 1) < 1$. $Q$ of \eqref{equmfcQ} is continuous.
\end{lemma}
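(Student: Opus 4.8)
The plan is to show that $Q$ is the uniform limit of a sequence of continuous functions, hence continuous. Start from the operator $B$ defined in \eqref{equ:bellman_original}, which by Proposition \ref{prop:veri} is a $\gamma$-contraction on $\{f\in\R^{\Cc}:\|f\|_\infty\leq V_{\max}\}$ with unique fixed point $Q$. Fix any continuous $q_0$ in this ball (e.g.\ $q_0\equiv 0$) and set $q_{n+1}=Bq_n$. By the Banach fixed point theorem, $\|q_n-Q\|_\infty\to 0$. Since a uniform limit of continuous functions on the metric space $(\Cc,d_\Cc)$ is continuous, it suffices to prove that $B$ maps bounded continuous functions to continuous functions; then each $q_n$ is continuous by induction and we are done.

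So the core step is: if $q\in\R^\Cc$ is continuous (and bounded by $V_{\max}$), then $Bq$ is continuous. Write $(Bq)(c)=r(c)+\gamma\max_{\tilde h\in\Hc}q(\Phi(c),\tilde h)$. The first term $r$ is Lipschitz by Lemma \ref{lemma:cont_r}, and $c\mapsto\Phi(c)$ is Lipschitz by Lemma \ref{lemma:cont_phi}, so $c\mapsto(\Phi(c),\tilde h)$ is continuous into $\Cc$ for each fixed $\tilde h$. The remaining issue is the continuity of $c\mapsto\max_{\tilde h\in\Hc}q(\Phi(c),\tilde h)$. Here I would invoke compactness of $\Hc$: recall $\Hc$ is isomorphic to a finite product of probability simplices $\Pc(\Uc)$, hence compact in $d_\Hc$, so $\Cc=\Pc(\Xc)\times\Hc$ is compact. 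A continuous function $q$ on the compact metric space $\Cc$ is uniformly continuous. A standard argument then shows that the partial maximum of a uniformly continuous function over a compact parameter set is continuous (indeed uniformly continuous): for $c,c'$, $|\max_{\tilde h}q(\Phi(c),\tilde h)-\max_{\tilde h}q(\Phi(c'),\tilde h)|\leq\sup_{\tilde h}|q(\Phi(c),\tilde h)-q(\Phi(c'),\tilde h)|$, and by uniform continuity of $q$ together with $\|(\Phi(c),\tilde h)-(\Phi(c'),\tilde h)\|=\|\Phi(c)-\Phi(c')\|_1\leq(2L_P+1)d_\Cc(c,c')$, the right-hand side is small when $d_\Cc(c,c')$ is small, uniformly in $\tilde h$. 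Hence $Bq$ is continuous.

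Putting these together: by induction every $q_n$ is bounded and continuous, $q_n\to Q$ uniformly, and therefore $Q$ is continuous on $(\Cc,d_\Cc)$. Note the hypothesis $\gamma(2L_P+1)<1$ is not actually needed for mere continuity (it yields Lipschitz continuity with an explicit modulus); I would either use it to get the stronger Lipschitz statement — estimating $\mathrm{Lip}(q_{n+1})\leq(\tilde R+2L_{\tilde r}) + \gamma(2L_P+1)\mathrm{Lip}(q_n)$ and passing to the limit of this geometric recursion, which converges precisely because $\gamma(2L_P+1)<1$ — or simply remark that it is assumed for consistency with later results. The main obstacle is the continuity of the sup-over-$\Hc$ term, which is exactly where compactness of $\Hc$ and uniform continuity of $q$ (not just pointwise continuity) enter; everything else is bookkeeping with the Lipschitz lemmas already established.
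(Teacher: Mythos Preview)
Your proof is correct, but it takes a genuinely different route from the paper's. The paper argues directly at the level of trajectories: fixing $c,c'\in\Cc$, it picks an $\epsilon/2$-optimal policy $\pi$ at $c$, runs the \emph{same} action sequence $h_0,h_1,\dots$ from $c'$, and uses the Lipschitz estimates for $r$ and $\Phi$ to bound $|r(\mu_t,h_t)-r(\mu'_t,h_t)|\leq L_r L_\Phi^t\,d_\Cc(c,c')$. Summing the geometric series $\sum_t(\gamma L_\Phi)^t$ (which is exactly where the hypothesis $\gamma(2L_P+1)<1$ enters) yields $|Q(c)-Q(c')|\leq \frac{L_r}{1-\gamma L_\Phi}\,d_\Cc(c,c')+\epsilon$, so in fact Lipschitz continuity with the explicit constant $L_r/(1-\gamma L_\Phi)$.

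Your approach is the operator-theoretic one: show $B$ preserves continuity (via compactness of $\Hc$ and uniform continuity on the compact $\Cc$), then pass to the uniform limit. This is cleaner in one respect---as you correctly observe, it gives plain continuity of $Q$ under only $\gamma<1$, without the extra condition $\gamma(2L_P+1)<1$. The paper's argument, on the other hand, immediately delivers the sharper Lipschitz bound that is used downstream (e.g.\ in Lemma~\ref{lemma:shenqi_Q} and Step~2 of Theorem~\ref{thm:conv_mfc}); your sketched Lipschitz recursion $\mathrm{Lip}(q_{n+1})\leq L_r+\gamma L_\Phi\,\mathrm{Lip}(q_n)$ recovers the same constant, so either route ultimately yields what the later analysis needs.
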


The continuity property of  $Q$ from Lemma \ref{continuity-Qc}, along with the compactness of $\Hc$ and Proposition \ref{prop:veri}, leads to the following existence of stationary optimal policy. 
\begin{Lemma}\label{lemma:exist_policy}
    Assume Assumptions~\ref{ass:r_MFC},~\ref{ass:P_MFC} and $\gamma\cdot (2L_P + 1) < 1$. There exists an optimal stationary policy $\pi^*:\Pc(\Xc)\to\Hc$ such that $Q^{\pi^*}=Q$.
\end{Lemma}

This existence of a stationary optimal policy is essential for the convergence analysis of our algorithm MFC-K-Q in Algorithm \ref{QL_CD}. In particular, it allows for comparing the optimal values of two MDPs with different action spaces: \eqref{mfc_objective_1} and its variant defined in \eqref{mfc_objective_dis}-\eqref{mfc_dynamics_dis}.

Note that the existence of a stationary optimal policy is well known  when the state and action spaces are {\it finite} (see for example \cite{sutton2018reinforcement}) or {\it countably infinite} (see for example \cite[chapter 9]{meyn2008control}). Yet, 
we are unable to find any prior corresponding result for the case with continuous state-action space.

\section{MFC-K-Q Algorithm via Kernel Regression and Approximated Bellman Operator}\label{sec:algorithm}
In this section, we will develop a kernel-based Q-learning algorithm (MFC-K-Q) for the MFC problem with learning based on \reff{BellmanQ}.

Note from \eqref{BellmanQ}, the MFC problem with learning is different from the classical MDP \cite{sutton2018reinforcement} in two aspects. First, the lifted state space $\Pc(\Xc)$ and lifted action space $\Hc$ are continuous, rather than discrete or finite. Second, the maximum in the Bellman operator is taken over a continuous space $\Hc$. 

To handle the lifted continuous state-action space, we use a kernel regression method on the discretized state-action space. Kernel regression is a local averaging approach for approximating {\it unknown} state-action pair from  {\it observed} data on a discretized space called {\it $\epsilon$-net}. Mathematically, a set $\Cc_\epsilon=\{c^i=(\mu^i, h^i)\}_{i=1}^{N_\epsilon}$ is an $\epsilon$-net for $\Cc$ if $\min_{1 \leq i \leq N_\epsilon} d_\Cc(c, c^i)$ $<$ $\epsilon$ for all $c$ $\in$ $\Cc$. Here $N_\epsilon$ is the size of $\Cc_\epsilon$. Note that compactness of $\Cc$ implies the existence of such an $\epsilon$-net $\Cc_\epsilon$. The choice of $\epsilon$ is critical for the convergence and the sample complexity analysis.

Correspondingly, we  define the so-called {\it kernel regression operator} $\Gamma_K:\mathbb{R}^{\Cc_\epsilon}\to\mathbb{R}^{\Cc}$:
\begin{eqnarray}\label{kernel_def}
\Gamma_Kf(c)=\Sum_{i=1}^{N_\epsilon}K(c^i,c)f(c^i),
\end{eqnarray}
where $K({c^i,c})$ $\geq$ $0$ is a weighted kernel function such that for all $c \in \mathcal{C}$ and $c^i \in \mathcal{C}_{\epsilon}$,
\begin{equation}\label{equ:kernel_cond}
    \sum_{i=1}^{N_{\epsilon}}K(c^i,c)=1\text{, and }K(c^i,c)=0\text{ if }d_{\mathcal{C}}(c^i,c)>\epsilon.
\end{equation}
In fact, $K$ can be of any form
   $ K(c^i, c) = \frac{\phi(c^i, c)}{\sum_{i=1}^{N_\epsilon} \phi(c^i, c)}$,
with some function $\phi$ satisfying $\phi\geq0$ and $\phi(x,y)=0$ when $d_{\mathcal{C}}(x,y)\geq \epsilon$. (See Section \ref{sec:experiments} for some choices of  $\phi$).

Meanwhile, to avoid maximizing over a continuous space $\Hc$ as in the Bellman equation \reff{BellmanQ}, we take the maximum over the $\epsilon$-net $\Hc_\epsilon$ on $\Hc$. Here $\Hc_\epsilon$ is an $\epsilon$-net on $\Hc$ induced from $\Cc_\epsilon$, i.e., ${\Hc}_\epsilon$ contains all the possible action choices in $\Cc_{\epsilon}$, whose size is denoted by $N_{\Hc_\epsilon}$.

The corresponding approximated Bellman operator $B_\epsilon$ acting on functions is then defined on the $\epsilon$-net $\Cc_{\epsilon}$: $\mathbb{R}^{\Cc_\epsilon}\to\mathbb{R}^{\Cc_\epsilon}$ such that
\begin{equation}\label{equ:bellman_kernel}
    (B_\epsilon\,q)(c^i)=r(c^i)+\gamma\max_{\tilde{h}\in\Hc_{\epsilon}}\Gamma_Kq(\Phi(c^i),\tilde{h}).
\end{equation}
Since $(\Phi(c^i), \tilde{h})$ may not be on the $\epsilon$-net, one needs to approximate the value at that point via the kernel regression $\Gamma_Kq(\Phi(c^i),\tilde{h})$.

In practice, one may only have access to noisy estimations $\{\widehat{r}(c^i),\widehat{\Phi}(c^i)\}_{i=1}^{N_\epsilon}$ instead of the accurate data $\{r(c^i),\Phi(c^i)\}_{i=1}^{N_\epsilon}$ on $\Cc_\epsilon$. Taking this into consideration, Algorithm \ref{QL_CD} consists of two steps. First,  it collects samples on $\Cc$ given an exploration policy. For each component $c^i$ on the $\epsilon$-net $\Cc_\epsilon$, the estimated data $(\widehat{r}(c^i),\widehat{\Phi}(c^i))$ is computed by averaging samples in the $\epsilon$-neighborhood of $c^i$. Second, the fixed point iteration is applied to the approximated Bellman operator $B_\epsilon$ with $\{\widehat{r}(c^i),\widehat{\Phi}(c^i)\}_{i=1}^{N_\epsilon}$. Under appropriate conditions, Algorithm \ref{QL_CD} provides an accurate estimation of the true Q function with efficient sample complexity (See Theorem \ref{thm:conv_mfc}). 

\begin{algorithm}[!ht]
  \caption{\textbf{Kernel-based Q-learning Algorithm for MFC (MFC-K-Q)}}
  \label{QL_CD}
\begin{algorithmic}[1]
    \STATE \textbf{Input}: Initial state distribution $\mu_0$, $\epsilon>0$, $\epsilon$-net on $\Cc:\Cc_{\epsilon}=\{c^i=(\mu^i, h^i)\}_{i=1}^{N_\epsilon}$, exploration policy $\pi$ taking actions from $\Hc_\epsilon$ induced from $\Cc_{\epsilon}$, regression kernel $K$ on $\Cc_{\epsilon}$.
    \STATE \textbf{Initialize}: $\widehat{r}(c^i)=0$, $\widehat{\Phi}(c^i)=0$, $N(c^i)=0, \forall i$.
    \REPEAT
        \STATE At the current state {distribution} $\mu_t$, act $h_t$ according to $\pi$, observe $\mu_{t+1}=\Phi(\mu_t,h_t)$ and $r_t=r(\mu_t,h_t)$.
        \FOR {${1 \leq i \leq N_\epsilon}$}
            \IF {$d_\Cc(c^i, (\mu_t,h_t))<\epsilon$} 
                \STATE $N(c^i) {\leftarrow}N(c^i)+1$.
                \STATE $\widehat{r}(c^i){\leftarrow}\frac{N(c^i)-1}{N(c^i)}\cdot\widehat{r}(c^i)+\frac{1}{N(c^i)}\cdot r_t$
                \STATE $\widehat{\Phi}(c^i){\leftarrow}\frac{N(c^i)-1}{N(c^i)}\cdot\widehat{\Phi}(c^i)+\frac{1}{N(c^i)}\cdot \mu_t$
            \ENDIF
        \ENDFOR
    \UNTIL{$N(c^i)>0, \forall i$}.
    \STATE \textbf{Initialize}: $\widehat{q}_0(c^i)=0,\forall c^i\in\Cc_{\epsilon}$, $l=0$.
    \REPEAT
        \FOR {$c^i\in\Cc_{\epsilon}$}
        {
            \STATE $\widehat{q}_{l+1}(c^i) {\leftarrow}  \Big(\widehat{r}(c^i)+$ $\gamma\max_{\tilde{h}\in\Hc_\epsilon}\Gamma_K\widehat{q}_l(\widehat{\Phi}(c^i),\tilde{h})\Big)$.
        }
        \ENDFOR
        \STATE $l=l+1$.
    \UNTIL{converge}
\end{algorithmic}
\end{algorithm}

\section{Convergence and Sample Complexity Analysis of MFC-K-Q}\label{sec:mfc-theorey}
In this section, we will establish the convergence of MFC-K-Q algorithm and analyze its sample complexity. 
The convergence analysis in Section \ref{subsec:MFC_convergence} relies on studying the  fixed point iteration of $B_\epsilon$; and  the complexity analysis in Section \ref{subsec:MFC_complexity} is   based on an upper bound of the necessary sample size to visit each $\epsilon$-neighborhood of the $\epsilon$-net at least once.


In addition to Assumptions~\ref{ass:r_MFC} and~\ref{ass:P_MFC}, the following conditions are needed for  the convergence and the sample complexity analysis. 

\begin{Assumption}[Controllability of the dynamics]\label{ass:dynamic}
For all $\epsilon$, there exists $M_\epsilon\in\mathbb{N}$ {such that} for any $\epsilon$-net $\Hc_\epsilon$ on $\Hc$ { and } $\mu,\mu'\in\Pc(\Xc)$, there exists an action sequence $(h^1,\dots, h^m)$
with $h^i\in\Hc_\epsilon$ and $m<M_\epsilon$, with which  the state  $\mu$ will be driven to an $\epsilon$-neighborhood of $\mu'$.
\end{Assumption}

\begin{Assumption}[Regularity of kernels]\label{ass:kernel}
For any point $c\in\Cc$, there exist at most $N_{K}$ points $c^i$'s in $\Cc_\epsilon$ such that $K(c^i,c)>0$. Moreover, there exists an $L_K>0$ such that for all $c \in \Cc_\epsilon, c',c''\in\Cc, |K(c,c')-K(c,c'')|\leq L_K\cdot d_{\Cc}(c',c'')$.
\end{Assumption}

Assumption~\ref{ass:dynamic} ensures the dynamics to be controllable. Assumption~\ref{ass:kernel} is easy to be satisfied: take a uniform grid as the $\epsilon$-net, then $N_K$ is roughly bounded from above by $2^{\text{dim}(\Cc)}$;
meanwhile, a number of commonly used kernels, including  the triangular kernel in Section \ref{sec:experiments}, satisfy the Lipschitz condition in 
Assumption~\ref{ass:kernel}.


\subsection{Convergence Analysis} \label{subsec:MFC_convergence}
To start, recall  the Lipschitz continuity of the aggregated rewards $r$ and dynamics $\Phi$ from Lemma \ref{lemma:cont_r} and Lemma \ref{lemma:cont_phi}. To simplify the notation,  denote $L_r:=\tilde{R}+2L_{\tilde{r}}$ as the Lipschitz constant of $r$ and  $L_\Phi:=2L_P+1$ as the Lipschitz constant of $\Phi$.

Next, recall  that there are three sources of the approximation error in Algorithm \ref{QL_CD}: the kernel regression $\Gamma_K$ on $\Cc$ with the $\epsilon$-net $\Cc_\epsilon$, the discretized action space $\Hc_\epsilon$ on $\Hc$, and the sampled data $\widehat{r}$ and $\widehat{\Phi}$ for both the dynamics and the rewards.

The key idea for the convergence analysis  is to decompose the error based on  these sources and to analyze each decomposed error accordingly. That is to consider the following different types of Bellman operators:
\begin{itemize}
    \item the operator $B$ in \eqref{equ:bellman_original} for  \eqref{mfc_objective_1};
    \item the operator $B_{\Hc_{\epsilon}}:\mathbb{R}^{\Cc}\to\mathbb{R}^{\Cc}$ which involves the discretized action space $\Hc_\epsilon$
    \begin{equation}\label{equ:bellman_approx_A}
        B_{\Hc_{\epsilon}}q(c)=r(c)+\gamma\max_{\tilde{h}\in\Hc_{\epsilon}}q(\Phi(c),\tilde{h});
    \end{equation}
    \item the operator $B_\epsilon$  in \reff{equ:bellman_kernel} defined on the $\epsilon$-net $\Cc_{\epsilon}$, 
which involves the discretized action space $\Hc_\epsilon$, and the kernel approximation;
    \item the operator $\widehat{B}_\epsilon: \mathbb{R}^{\Cc_\epsilon}\to\mathbb{R}^{\Cc_\epsilon}$  defined by
    \begin{equation}\label{equ:Bellman_biased_data}
        (\widehat{B}_\epsilon\,q)(c^i)=\widehat{r}(c^i)+\gamma\max_{\tilde{h}\in\Hc_{\epsilon}}\Gamma_Kq(\widehat{\Phi}(c^i),\tilde{h}),
    \end{equation}
    which involves the discretized action space $\Hc_\epsilon$, the kernel approximation, and the  estimated data.
    \item the operator $T$ that maps $\{f\in\R^{\Pc(\Xc)}:\|f\|_\infty\leq V_{\max}\}$ to itself, such that
    \begin{equation}\label{equ:Bellman_tildeV}
        Tv(\mu)=\max_{h\in\Hc_\epsilon}(r(\mu,h)+\gamma v(\Phi(\mu,h))).
    \end{equation}
\end{itemize}
We show that under mild assumptions, each of the above operators admits a unique fixed point.

\begin{Lemma}\label{lemma:Contra_Q}
    Assume Assumption~\ref{ass:r_MFC}. Let $V_{\max}:=\frac{R}{1-\gamma}$. Then,
    \begin{itemize}
    \setlength{\itemindent}{.5in}
        \item $B$ in \eqref{equ:bellman_original} has a unique fixed point in $\{f\in\R^{\Cc}:\|f\|_\infty\leq V_{\max}\}$. That is, there exists a unique $Q$ such that
        \begin{eqnarray}
         ({B}\,Q)(c)={r}(c)+\gamma\max_{\tilde{h}\in\Hc}Q({\Phi}(c),\tilde{h}).
        \end{eqnarray}
        \item $B_{\Hc_{\epsilon}}$ in \eqref{equ:bellman_approx_A} has a unique fixed point  in $\{f\in\R^{\Cc}:\|f\|_\infty\leq V_{\max}\}$. That is, there exists a unique
        $Q_{\Hc_\epsilon}$ such that
        \begin{eqnarray}\label{eq:tildeQc}
        B_{\Hc_{\epsilon}}Q_{\Hc_\epsilon}(c)=r(c)+\gamma\max_{\tilde{h}\in\Hc_{\epsilon}}Q_{\Hc_\epsilon}(\Phi(c),\tilde{h}).
    \end{eqnarray}
        \item $B_\epsilon$ in \eqref{equ:bellman_kernel} has a unique fixed point in $\{f\in\R^{\Cc_\epsilon}:\|f\|_\infty\leq V_{\max}\}$. That is, there exists a unique $Q_\epsilon$ such that for any $c^i\in\Cc_\epsilon$,
        \begin{eqnarray}\label{eq:Qc_epsilon}
         (B_\epsilon\,Q_\epsilon)(c^i)=r(c^i)+\gamma\max_{\tilde{h}\in\Hc_{\epsilon}}\Gamma_KQ_\epsilon(\Phi(c^i),\tilde{h}).
        \end{eqnarray}
        \item $\widehat{B}_\epsilon$ in \eqref{equ:Bellman_biased_data} has a unique fixed point in $\{f\in\R^{\Cc_\epsilon}:\|f\|_\infty\leq V_{\max}\}$. That is, there exists a unique $\widehat{Q}_\epsilon$ such that for any $c^i\in\Cc_\epsilon$, and $\widehat{r}, \widehat{\Phi}$ sampled from $c^i$'s $\epsilon$-neighborhood,
        \begin{eqnarray}\label{eq:hatQc_epsilon}
         (\widehat{B}_\epsilon\,\widehat{Q}_\epsilon)(c^i)=\widehat{r}(c^i)+\gamma\max_{\tilde{h}\in\Hc_{\epsilon}}\Gamma_K\widehat{Q}_\epsilon(\widehat{\Phi}(c^i),\tilde{h}).
        \end{eqnarray}
        \item $T$ has a unique fixed point $V_{\Hc_\epsilon}$ in $\{f\in\R^{\Pc(\Xc)}:\|f\|_\infty\leq V_{\max}\}$. That is
        \begin{equation}\label{eq:tildeV_c}
            T\,V_{\Hc_\epsilon}(\mu)=\max_{h\in\Hc_\epsilon}(r(\mu,h)+\gamma V_{\Hc_\epsilon}(\Phi(\mu,h))).
        \end{equation}
    \end{itemize}
\end{Lemma}

\begin{Lemma}[Characterization of $Q_{\Hc_\epsilon}$]\label{lemma:shenqi_Q}
    Assume Assumption~\ref{ass:r_MFC}. $V_{\Hc_\epsilon}$ in \eqref{eq:tildeV_c} is the optimal value function for the following MFC problem with continuous state space $\Pc(\Xc)$ and discretized action space $\Hc_\epsilon$. 
    \begin{equation}\label{mfc_objective_dis}
        V_{\Hc_\epsilon}(\mu) = \sup_{\pi\in\Pi_\epsilon}\sum_{t=0}^{\infty} \gamma^{t} r(\mu_t, \pi_t(\mu_t))
    \end{equation}
with $\Pi_\epsilon := \{\pi=\{\pi_t\}_{t=0}^\infty | \pi_t:\Pc(\Xc)\to\Hc_\epsilon\}$,
    \vspace{-0.1cm}
    subject to
    \vspace{-0.1cm}
    \begin{equation}\label{mfc_dynamics_dis}
        \mu_{t + 1}=\Phi(\mu_t, \pi_t(\mu_t)), \mu_0=\mu.
    \end{equation}
Moreover, $Q_{\Hc_\epsilon}$ in \eqref{eq:tildeQc} and $V_{\Hc_\epsilon}$ in \eqref{eq:tildeV_c} satisfy the following relation: 
    \begin{equation}\label{equ:shenqi_Q}
        Q_{\Hc_\epsilon}(\mu,h)=r(\mu,h)+\gamma V_{\Hc_\epsilon}(\Phi(\mu,h)),
    \end{equation}
    and $Q_{\Hc_\epsilon}$ is Lipschitz continuous.
    
\end{Lemma}
This connection between $Q_{\Hc_\epsilon}$ and the optimal value function $V_{\Hc_\epsilon}$ of the MFC problem with continuous state space $\Pc(\Xc)$ and discretized action space $\Hc_\epsilon$, is critical for estimating the error bounds in the convergence analysis.

\begin{Theorem}[Convergence]\label{thm:conv_mfc}
    Given $\epsilon > 0$. Assume Assumptions~\ref{ass:r_MFC},~\ref{ass:P_MFC},~\ref{ass:dynamic}, and ~\ref{ass:kernel}, and  $\gamma\cdot L_\Phi < 1$. Let $\widehat{B}_\epsilon: \mathbb{R}^{\Cc_\epsilon}\to\mathbb{R}^{\Cc_\epsilon}$ be the operator defined in \eqref{equ:Bellman_biased_data}
    \begin{eqnarray*}
        (\widehat{B}_\epsilon\,q)(c^i)=\widehat{r}(c^i)+\gamma\max_{\tilde{h}\in\Hc_{\epsilon}}\Gamma_Kq(\widehat{\Phi}(c^i),\tilde{h}),
    \end{eqnarray*}
    where $\widehat{r}(c)$ and $\widehat{\Phi}(c)$ are sampled from an $\epsilon$-neighborhood of $c$, then it has a unique fixed point $\widehat{Q}_\epsilon$ in $\{f\in\mathbb{R}^{\Cc_\epsilon}:||f||_\infty\leq V_{\max}\}$. Moreover, the sup distance between $\Gamma_K\widehat{Q}_\epsilon$ in \reff{kernel_def} and ${Q}$ in \reff{equmfcQ} is
    \begin{equation}\label{equ:error_bound}
        ||Q-\Gamma_K\widehat{Q}_\epsilon||_\infty\leq\frac{L_r + 2\gamma N_K L_K V_{\max} L_{\Phi}}{1-\gamma}\cdot\epsilon+\frac{2 L_r}{(1-\gamma L_{\Phi})(1-\gamma)}\cdot\epsilon.
    \end{equation}
In particular, for a fixed $\epsilon$, Algorithm~\ref{QL_CD} converges linearly to $\widehat{Q}_\epsilon$.
\end{Theorem}

\begin{proof}[Proof of Theorem \ref{thm:conv_mfc}]
The proof of the the convergence is to  quantify  $ ||Q-\Gamma_K\widehat{Q}_\epsilon||_\infty$ from the following estimate
\begin{equation}\label{equ:error_decomp}
    ||Q-\Gamma_K\widehat{Q}_\epsilon||_\infty\leq \underbrace{||Q-Q_{\Hc_\epsilon}||_\infty}_{(I)}+\underbrace{||Q_{\Hc_\epsilon}-\Gamma_K{Q}_{\epsilon}||_\infty}_{(II)}+\underbrace{||\Gamma_K{Q}_{\epsilon}-\Gamma_K\widehat{Q}_\epsilon||_\infty}_{(III)}.
\end{equation}
    
    (I) can be regarded as the approximation error from discretizing the lifted action space $\Hc$ by $\Hc_\epsilon$; (II) is the error from the kernel regression on $\Cc$ with the $\epsilon$-net $\Cc_\epsilon$; and (III) is estimating the error introduced by the sampled data $\widehat{r}$ and $\widehat{\Phi}$.

    {\it Step 1.} We shall use \ref{lemma:exist_policy} and Lemmas~\ref{lemma:shenqi_Q} to show that $||Q-Q_{\Hc_\epsilon}||_\infty \leq \frac{ L_r}{(1-\gamma L_{\Phi})(1-\gamma)}\cdot\epsilon$.
    By Lemma \ref{lemma:shenqi_Q}, $Q(c)-Q_{\Hc_\epsilon}(c)=\gamma\big( V(\Phi(c))-V_{\Hc_\epsilon}(\Phi(c))\big)$, where $V$ is the optimal value function of the problem on
    $\Pc(\Xc)$ and $\Hc$ in \eqref{mfc_objective_1}, and $V_{\Hc_\epsilon}$ is the optimal value function of the problem on $\Pc(\Xc)$ and $\Hc_\epsilon$ \eqref{mfc_objective_dis}-\eqref{mfc_dynamics_dis}. Hence it suffices to prove that $||{V}-V_{\Hc_\epsilon}||_\infty\leq\frac{L_r}{(1-\gamma L_{\Phi})(1-\gamma)}\cdot\epsilon$. We adopt the similar strategy as in the proof of Lemma \ref{lemma:exist_policy}.

    Let $\pi^*$ be the optimal policy of  \reff{mfc_objective_1}, whose existence is shown in Lemma~\ref{lemma:exist_policy}. For any $\mu\in\Pc(\Xc)$, let $(\mu,h)=(\mu_0,h_0),(\mu_1,h_1),(\mu_2,h_2),\dots,(\mu_t,h_t),\dots$ be the trajectory of the system under the optimal policy
    $\pi^*$, starting from $\mu$. We have $V(\mu)=\sum_{t=0}^\infty\gamma^tr(\mu_t,h_t)$.

    Now let $h^{i_t}$ be the nearest neighbor of $h_t$ in $\Hc_\epsilon$. $d_\Hc(h^{i_t}, h_t)\leq\epsilon$. Consider the trajectory of the system starting from $\mu$ and then taking $h^{i_0},\dots,h^{i_t},\dots$, denote the corresponding state by $\mu'_{t}$. We have
    $V_{\Hc_\epsilon}\geq\sum_{t=0}^\infty\gamma^t r(\mu'_t,h^{i_t})$, since $V_{\Hc_\epsilon}$ is the optimal value function.
    \begin{eqnarray*}
        d_{\Pc(\Xc)}(\mu'_{t}, \mu_t)=d_{\Pc(\Xc)}\big(\Phi(\mu'_{t-1}, h^{i_{t-1}}), \Phi(\mu_{t-1}, h_t)\big)\leq L_{\Phi}\cdot \big(d_{\Pc(\Xc)}(\mu'_{t-1}, \mu_{t-1})+\epsilon\big)
    \end{eqnarray*}
By the iteration,  we have  $d_{\Pc(\Xc)}(\mu'_{t}, \mu_t) \leq  \frac{L_\Phi - L_{\Phi}^{t+1}}{1- L_{\Phi}} \cdot \epsilon$, and
$|r(\mu'_t,h^{i_t})-r(\mu_t,h_t)|\leq L_r\cdot \big(d_{\Pc(\Xc)}(\mu'_{t}, \mu_t)+\epsilon\big)\leq L_r\cdot \frac{L_{\Phi}^{t+1}-1}{L_{\Phi}-1}\cdot \epsilon,$
    which implies
    \begin{eqnarray*}
        0\leq V(\mu)-V_{\Hc_\epsilon}(\mu)\leq \sum_{t=0}^\infty\gamma^t(r(\mu_{t},h_{t}) - r(\mu'_{t},h^{i_t}))
        \leq\sum_{t=0}^\infty\gamma^t\cdot L_r\cdot \frac{L_{\Phi}^{t+1}-1}{L_{\Phi}-1}\cdot \epsilon=\frac{L_r}{(1-\gamma L_{\Phi})(1-\gamma)}\cdot \epsilon.
    \end{eqnarray*}
    Here $0\leq V(\mu)-V_{\Hc_\epsilon}(\mu)$ is by the optimality of $V_{\Cc}$.

    {\it Step 2.} We shall use Lemmas~\ref{lemma:Contra_Q} and \ref{lemma:shenqi_Q} to show that $||Q_{\Hc_\epsilon}-\Gamma_K{Q}_{\epsilon}||_\infty \leq \frac{  L_r}{(1-\gamma L_{\Phi})(1-\gamma)}\cdot\epsilon$. Note that  
    \begin{eqnarray*}
        &&||\Gamma_KQ_\epsilon-Q_{\Hc_\epsilon}||_\infty=||\Gamma_KB_\epsilon Q_\epsilon-Q_{\Hc_\epsilon}||_\infty=||\Gamma_K B_{\Hc_{\epsilon}} \Gamma_K Q_\epsilon-Q_{\Hc_\epsilon}||_\infty\\
        &\leq& ||\Gamma_K B_{\Hc_{\epsilon}} \Gamma_K Q_\epsilon - \Gamma_K B_{\Hc_{\epsilon}} Q_{\Hc_\epsilon}||_\infty + ||\Gamma_K B_{\Hc_{\epsilon}} Q_{\Hc_\epsilon} - Q_{\Hc_\epsilon}||_\infty\\
        &=& ||\Gamma_K B_{\Hc_{\epsilon}} \Gamma_K Q_\epsilon - \Gamma_K B_{\Hc_{\epsilon}} Q_{\Hc_\epsilon}||_\infty + ||\Gamma_K Q_{\Hc_\epsilon} - Q_{\Hc_\epsilon}||_\infty
        \leq \gamma ||\Gamma_KQ_\epsilon-Q_{\Hc_\epsilon}||_\infty + ||\Gamma_K Q_{\Hc_\epsilon} - Q_{\Hc_\epsilon}||_\infty.
    \end{eqnarray*}
    Here the first and the third equalities hold since $Q_\epsilon$ is the fixed point of $B_\epsilon$ and $Q_{\Hc_\epsilon}$ is the fixed point of $B_{\Hc_{\epsilon}}$.
    The second inequality is by the fact that $\Gamma_K$ is a non-expansion mapping, i.e., $\|\Gamma_K f\|_\infty\leq\|f\|_\infty$, and that $B_{\Hc_{\epsilon}}$ is a contraction with modulus $\gamma$ with the supremum norm.
    Meanwhile, for any Lipschitz function $f\in\R^\Cc$ with Lipschitz constant $L$, we have for all $c\in\Cc$,
    \begin{eqnarray*}
        |\Gamma_K f(c) - f(c)|=\sum_{i=1}^{N_\epsilon}K(c,c^i)|f(c^i)-f(c)|\leq\sum_{i=1}^{N_\epsilon}K(c,c^i)\epsilon L=\epsilon L.
    \end{eqnarray*}
Note here the inequality follows from $K(c,c^i)=0$ for all $d_{\Cc}(c,c^i)\geq\epsilon$.
    Therefore, 
$        ||\Gamma_KQ_\epsilon-Q_{\Hc_\epsilon}||_\infty \leq \frac{L_{Q_{\Hc_\epsilon}}}{1-\gamma}\epsilon$,
    where $L_{Q_{\Hc_\epsilon}}=\frac{L_r}{1-\gamma L_{\Phi}}$ is the Lipschitz constant for $Q_{\Hc_\epsilon}$.
    
    {\it Final step.} 
    Let $q_0$ denote the zero function on $\Cc_\epsilon$. By Lemma \ref{lemma:Contra_Q}, $Q_\epsilon=\lim_{n\to\infty}B_\epsilon^n q_0$, and $\widehat{Q}_\epsilon=\lim_{n\to\infty}\widehat{B}_\epsilon^n q_0$. Denote $q_n:=B_\epsilon^n q_0$, $\widehat{q}_n:=\widehat{B}_\epsilon^n q_0$, and $e_n:=||q_n-\widehat{q}_n||_{\infty}$. For any $c\in \Cc_\epsilon$,
    \begin{eqnarray*}
         e_{n+1}(c)&=&\big| \widehat{r}(c)+\gamma\max_{\tilde{h}\in\Hc_{\epsilon}}\Gamma_K\widehat{q}_n(\widehat{\Phi}(c),\tilde{h})-r(c)-\gamma\max_{\tilde{h}\in\Hc_{\epsilon}}\Gamma_Kq_n(\Phi(c),\tilde{h})\big|\\
        &\leq&|\widehat{r}(c)-r(c)|+\gamma\max_{\tilde{h}\in\Hc_{\epsilon}}\big|\Gamma_K\widehat{q}_n(\widehat{\Phi}(c),\tilde{h})-\Gamma_Kq_n(\Phi(c),\tilde{h})\big|\\
        &\leq&\epsilon L_r+\gamma\max_{\tilde{h}\in\Hc_{\epsilon}}\big[|\Gamma_K\widehat{q}_n(\widehat{\Phi}(c),\tilde{h})-\Gamma_K\widehat{q}_n(\Phi(c),\tilde{h})|+|\Gamma_K\widehat{q}_n(\Phi(c),\tilde{h})-\Gamma_Kq_n(\Phi(c),\tilde{h})|\big].
    \end{eqnarray*}
    Here $|\widehat{r}(c) - r(c)|\leq\epsilon L_r$ because $\widehat{r}(c)$ is sampled from an $\epsilon$-neighborhood of $c$ and by Assumption~\ref{ass:r_MFC}. Moreover, for any fixed $\tilde{h}$, 
    \begin{eqnarray*}
        |\Gamma_K\widehat{q}_n(\widehat{\Phi}(c),\tilde{h})-\Gamma_K\widehat{q}_n(\Phi(c),\tilde{h})| &=& |\sum_{i=1}^{N_\epsilon}(K(c^i,(\widehat{\Phi}(c),\tilde{h}))-K(c^i,(\Phi(c),\tilde{h})))\widehat{q}_n(c^i)|\\
        &\leq& 2N_K L_K V_{\max}\cdot d_{\Pc(\Xc)}(\widehat{\Phi}(c),\Phi(c))\leq 2N_K L_K V_{\max} L_{\Phi} \epsilon.
    \end{eqnarray*}
    The first inequality comes from Assumption \ref{ass:kernel}, because $K(c^i,(\widehat{\Phi}(c),\tilde{h}))-K(c^i,(\Phi(c),\tilde{h}))$ is nonzero for at most $2N_K$ index $i\in\{1,2,\dots,N_\epsilon\}$, $K$ is Lipschitz continuous, and $||\widehat{q}_n||_\infty\leq V_{\max}$.
    The second inequality comes from the fact that $\widehat{\Phi}(c)$ is sampled from an $\epsilon$-neighborhood of $c$ and by Assumption~\ref{ass:P_MFC}. Meanwhile,
    \begin{eqnarray*}
        |\Gamma_K\widehat{q}_n(\Phi(c),\tilde{h})-\Gamma_Kq_n(\Phi(c),\tilde{h})|\leq||q_n-\widehat{q}_n||_{\infty}=e_n,
    \end{eqnarray*}
    since $\Gamma$ is non-expansion. Putting these pieces together, we have 
    \begin{eqnarray*}
        e_{n+1}=\max_{c\in\Cc_\epsilon}e_{n+1}(c)\leq\epsilon L_r + \epsilon\gamma 2N_K L_K V_{\max} L_{\Phi} + \gamma e_n.
    \end{eqnarray*}
    In this case, elementary algebra shows that $e_n\leq\epsilon\cdot\frac{L_r + \gamma 2N_K L_K V_{\max} L_{\Phi}}{1-\gamma}, \forall n$. Then since $\Gamma_K$ is non-expansion, $||\Gamma_K{Q}_{\Cc_\epsilon}-\Gamma_K\widehat{Q}_\epsilon||_\infty\leq\epsilon\cdot\frac{L_r + \gamma 2N_K L_K V_{\max} L_{\Phi}}{1-\gamma}$, hence the error bound \eqref{equ:error_bound}.
    
    The claim regarding the convergence rate follows from the $\gamma-$contraction of operator $\widehat{B}_\epsilon$.
\end{proof}

\subsection{Sample Complexity Analysis} \label{subsec:MFC_complexity}
In classical Q-learning for MDPs with stochastic environment,  every component in the $\epsilon$-net is required to be visited a number of times in order to get desirable estimate for the Q function. The usual terminology {\it covering time} refers to the expected {number of steps}  to visit every component in the $\epsilon$-net at least once, for a given exploration policy. The complexity analysis thus  focuses on the necessary rounds of the covering time. 

In contrast, visiting each component in the $\epsilon$-net {\it once} is sufficient with deterministic dynamics. We will demonstrate  that using deterministic mean-field dynamics to approximate N-agent stochastic environment will indeed significantly reduce the complexity analysis.

To start, denote $T_{\Cc, \pi}$ as the covering time of the $\epsilon$-net under (random) policy $\pi$, such that
\begin{eqnarray*}\label{def:T_c}
    T_{\Cc, \pi}&:=&\sup_{\mu\in\Pc(\Xc)}\inf\Big\{t>0:\mu_0=\mu,\forall c^i\in\Cc_\epsilon, \exists t_i\leq t,\\
    &&(\mu_{t_i}, h_{t_i})\text{ in the } \epsilon\text{-neighborhood of } c^i\text{, under the policy }\pi\Big\}.
\end{eqnarray*}
Recall that an $\epsilon'$-greedy policy
on $\Hc_\epsilon$ is a policy which with probability at least $\epsilon'$ will uniformly explore the actions on $\Hc_\epsilon$. Note that this type of policy always exists. 
And we have the following sample complexity result.

\begin{Theorem}[Sample complexity]\label{thm:sample_complexity}
    Given $\epsilon, \delta > 0$ and Assumption~\ref{ass:dynamic}, for any $\epsilon'>0$, let $\pi_{\epsilon'}$ be an $\epsilon'$-greedy policy
    on $\Hc_\epsilon$.
    Then
    \begin{equation}\label{equ:bound_T_c}
        \E[T_{\Cc,\pi_{\epsilon'}}]\leq\frac{(M_\epsilon+1) \cdot (N_{\Hc_\epsilon})^{M_\epsilon+1}}{(\epsilon')^{M_\epsilon+1}}\cdot \log(N_\epsilon).
    \end{equation}
    Here $M_\epsilon$ is defined in Assumption \ref{ass:dynamic}. Moreover, with probability $1-\delta$, for any initial state $\mu$, under the $\epsilon'$-greedy policy, the dynamics will visit
    each $\epsilon$-neighborhood of elements in $\Cc_\epsilon$ at least once, after
    \begin{equation}\label{equ:samp_comp}
        \frac{(M_\epsilon+1) \cdot (N_{\Hc_\epsilon})^{M_\epsilon+1}}{(\epsilon')^{M_\epsilon+1}}\cdot \log(N_\epsilon) \cdot e \cdot \log(1/\delta).
    \end{equation}
    time steps, where $\log(N_\epsilon)=\Theta(|\Xc||\Uc|\log(1/\epsilon))$, and $N_{\Hc_\epsilon}=\Theta((\frac{1}{\epsilon})^{(|\Uc|-1|)|\Xc|})$.
\end{Theorem}

Theorem~\ref{thm:sample_complexity} provides an upper bound $\Omega(\text{poly}((1/\epsilon)\cdot\log(1/\delta)))$ for the covering time under the $\epsilon'$-greedy policy,
 in terms of the size of the $\epsilon$-net and the accuracy $1/\delta$. 
The proof of Theorem \ref{thm:sample_complexity} relies on the following lemma.

\begin{Lemma}\label{lemma:covering}
    Assume for some policy $\pi$, $\E[T_{\Cc,\pi}]\leq T<\infty$. Then with probability $1-\delta$, for any initial state $\mu$, under the policy $\pi$, the dynamics will visit
    each $\epsilon$-neighborhood of elements in $\Cc_\epsilon$ at least once, after $T \cdot e \cdot \log(1/\delta)$ time steps, i.e. $\P(T_{\Cc,\pi}\leq T \cdot e \cdot \log(1/\delta))\geq 1-\delta$.
\end{Lemma}

\begin{proof}[Proof of Theorem~\ref{thm:sample_complexity}]
    Recall there are $N_\epsilon$ different pairs in the $\epsilon$-net. Denote the $\epsilon$-neighborhoods of those pairs by
    $B_\epsilon=\{B^i\}_{i=1}^{N_\epsilon}$. Without loss of generality, we may assume that $B^i$ are disjoint, since the covering time will only become smaller if they overlap with each other. 
    Let $T_k:=\min\{t>1:\text{k of }B_\epsilon\text{ is visited}\}$. $T_k-T_{k-1}$ is the time to 
    visit a new neighborhood after $k-1$ neighborhoods are visited. By Assumption~\ref{ass:dynamic}, for any $B^i\in B_\epsilon$ with center $(\mu^i, h^i)$, $\mu\in\Pc(\Xc)$, there
    exists a sequence of actions in $\Hc_\epsilon$, whose length is at most $M_\epsilon$, such that starting from $\mu$ and taking that sequence of actions
    will lead the  visit of the $\epsilon$-neighborhood of $\mu^i$. Then, at that point, taking $h^i$ will yield the  visit of $B^i$. Hence 
    $\forall B^i\in B_\epsilon$, $\mu\in\Pc(\Xc)$,
    \begin{eqnarray*}
        &&\P(B^i\text{ is visited in } M_\epsilon+1 \text{ steps}\,|\, \mu_{T_{k-1}}=\mu)\geq \left(\frac{\epsilon'}{N_{\Hc_\epsilon}}\right)^{M_\epsilon+1}.\\
        &&\P(\text{a new neighborhood is visited in } M_\epsilon+1 \text{ steps}\, | \mu_{T_{k-1}}=\mu)\geq(N_\epsilon-k+1)\cdot \left(\frac{\epsilon'}{N_{\Hc_\epsilon}}\right)^{M_\epsilon+1}.
    \end{eqnarray*}
This implies $\E[T_k-T_{k-1}]\leq \frac{M_\epsilon+1}{N_\epsilon-k+1}\cdot (\frac{N_{\Hc_\epsilon}}{\epsilon'})^{M_\epsilon+1}$. Summing $\E[T_k-T_{k-1}]$ from $k=1$ to $k=N_\epsilon$ yields the desired result. The second part follows directly from Lemma \ref{lemma:covering}. Meanwhile, $N_{\Hc_\epsilon}$, the size of the $\epsilon$-net in $\Hc$ is $\Theta((\frac{1}{\epsilon})^{(|\Uc|-1)|\Xc|})$, because $\Hc$ is a compact $(|\Uc|-1)|\Xc|$ dimensional manifold. Similarly, $N_{\epsilon}=\Theta((\frac{1}{\epsilon})^{|\Uc||\Xc|-1})$ as $\Cc$ is a compact $|\Uc||\Xc|-1$ dimensional manifold.
\end{proof}

\section{Mean-field Approximation to Cooperative MARL}\label{sec:connection}
In this section, we provide a complete description of the connections between cooperative MARL and MFC, in terms of the value function approximation and algorithmic approximation under the context of learning.
\subsection{Value Function Approximation}\label{sec:value_approx}
First we will show that under the Pareto optimality criterion,
  \reff{mfc_objective_2}  is an approximation to its corresponding  cooperative MARL, with an error of $\mathcal{O}(\frac{1}{\sqrt{N}})$. 

Recall the admissible policy $\pi = \{\pi_t\}_{t=0}^\infty \in \Pi$. Note that the cooperative MARL in Section \ref{subsec:nplayer} with $N$ identical, indistinguishable, and interchangeable agents becomes
\begin{linenomath}
\begin{align} \label{rewardagentj}
& \sup_{\pi} u_N^{\pi}(\mu^N):=\sup_{\pi} \frac{1}{N}\sum_{j=1}^N v^{j, \pi} (x^{j, N}, \mu^N) \nonumber =  \sup_{\pi} \frac{1}{N} \sum_{j=1}^N\E\Big[\sum_{t=0}^\infty \gamma^t \tilde r(x_t^{j, N}, \mu_t^N, u_t^{j, N}, \nu_t^N)\Big], \tag{MARL}\\
 &\text{subject to} \;\;  x_{t + 1}^{j, N} \sim P(x_t^{j, N}, \mu_t^N, u_t^{j, N}, \nu_t^N), \; u_t^{j, N} \sim \pi_t(x_t^{j, N}, \mu_t^N), \;\;\; 1 \leq j \leq N, \nonumber
\end{align}
\end{linenomath}
with initial conditions $x_0^{j,N} = x^{j, N}$ ($j=1,2,\cdots,N$) and $\mu_0^N(x) =\mu^N(x) := \frac{\sum_{j=1}^N 1(x^{j, N} = x)}{N}$ for $x\in \mathcal{X}$. By symmetry, one can denote $u_N^{\pi}(\mu^N):=\frac{1}{N}\sum_{j=1}^N v^{j, \pi} ({x^{j, N}, \mu^N})$.
\begin{Definition} $\pi^\epsilon$ is $\epsilon$-Pareto optimal for  \reff{rewardagentj} if
\begin{eqnarray*}
 u_N^{ \pi^\epsilon} \geq \sup_{\pi}  u_N^{ \pi} - \epsilon.
\end{eqnarray*}
\end{Definition}

\begin{Assumption}[Continuity of $\pi$]\label{ass: pi}
There exists $L_{\Pi} > 0$ such that for all $x \in \Xc, \mu_1, \mu_2 \in \Pc(\Xc)$, and $\pi \in \Pi$,
\begin{eqnarray*}
    \|\pi_t(\mu_1, x) - \pi_t(\mu_2, x)\|_1 \leq L_{\Pi} \|\mu_1 - \mu_2\|_1, \;\;\;\; \mbox{for any}\; t \geq 0.
\end{eqnarray*}
\end{Assumption}
This Lipschitz assumption for admissible policies is commonly used to bridge games in the  N-player setting and the mean-field setting \cite{HMC2006,GHXZ2019}.

We are now ready to show that the optimal policy for \reff{mfc_objective_2}   is approximately Pareto optimal for  \reff{rewardagentj} when $N \to \infty$.

\begin{Theorem}[Approximation]
    \label{NagentMFC} Assume $\gamma\cdot(2L_{P}+1)(1  + L_{\Pi}) < 1$ and
Assumptions \ref{ass:r_MFC}, \ref{ass:P_MFC} and \ref{ass: pi}, 
then there exists constant $C = C(L_P, L_{\tilde r}, L_{\Pi}, {{|\Xc|, |\Uc|}}, \tilde R,  \gamma)$, {depending on the dimensions of the state and action spaces in a sublinear order $(\sqrt{|\mathcal{X}|}+\sqrt{\mathcal|\Uc|})$}, and independent of the number of agents $N$, 
 such that
\beq \label{equvalueappro}
\sup_{\pi} \Big|u_N^{\pi}(\mu^N) - v^\pi (\mu^N)\Big| \leq C\frac{1}{\sqrt{N}},
\enq
for any initial condition $x_0^{j, N}=x^j$ $(j=1,2,\cdots,N)$ and $\mu^N(x) =\frac{\sum_{j=1}^N 1({x^{j, N}} = x)}{N}$ $(x\in \mathcal{X})$. Here $v^\pi$ and $u_N^{\pi}$ are given in \eqref{mfc_objective_2} and \eqref{rewardagentj} respectively.  Consequently, for any $\epsilon_1 > 0$, there exists an integer $D_{\epsilon_1}\in \mathbb{N}$ such that when $N \geq D_{\epsilon_1}$, any $\epsilon_2$-optimal policy for  \reff{mfc_objective_2} with learning  is $(\epsilon_1 + \epsilon_2)$-Pareto optimal for \reff{rewardagentj} with $N$ players.
\end{Theorem}

\begin{corollary}[Optimal value approximation]\label{corr:optimal_value_app} Assume the same conditions as in Theorem \ref{NagentMFC}. Further assume that there exists an optimal policy satisfying Assumption \ref{ass: pi} for \reff{mfc_objective_2} and \reff{rewardagentj}. Denote $\pi^* \in \arg\sup_{\pi \in \Pi} v^{\pi}$ and $\widetilde{\pi} \in \arg\sup_{\pi \in \Pi} u_N^{\pi}$, there exists a constant $C = C(L_P, L_{\tilde r}, L_{\Pi}, { {|\Xc|, |\Uc|}}, \tilde R, \gamma)$, {depending on the dimensions of the state and action spaces 
in a sublinear order $(\sqrt{|\mathcal{X}|}+\sqrt{\mathcal|\Uc|})$}, such that
\begin{eqnarray}
\left|v^{\pi^*}(\mu^N)- u^{\widetilde{\pi}}(\mu^N)\right|\leq \frac{C}{\sqrt{N}},
\end{eqnarray}
with initial conditions $x_0^{j,N} = x^{j, N}$ and $\mu^N := \frac{\sum_{j=1}^N 1(x^{j, N} = x)}{N}$.
\end{corollary}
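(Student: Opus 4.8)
The plan is to obtain Corollary~\ref{corr:optimal_value_app} as a direct two-sided (sandwich) consequence of the uniform estimate in Theorem~\ref{NagentMFC}, using only the optimality of $\pi^*$ for \reff{mfc_objective_2} and of $\widetilde{\pi}$ for \reff{rewardagentj}. First I would record the bookkeeping: since $\pi^*$ and $\widetilde{\pi}$ are assumed to satisfy Assumption~\ref{ass: pi}, both lie in the common admissible class $\Pi$ of symmetric, $\mu$-dependent policies of the form $\pi_t(x,\mu)$ used in both \reff{mfc_objective_2} and \reff{rewardagentj}; hence $u_N^{\pi^*}(\mu^N)$ and $v^{\widetilde{\pi}}(\mu^N)$ are both well defined, and the empirical measure $\mu^N\in\Pc(\Xc)$ is a legitimate initial condition for each problem. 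Existence of an optimal $\pi^*$ for \reff{mfc_objective_2} is guaranteed by Lemma~\ref{lemma:exist_policy} under $\gamma(2L_P+1)<1$, while existence of $\widetilde{\pi}$ is part of the hypothesis; I will also write $u^{\widetilde{\pi}}(\mu^N)=u_N^{\widetilde{\pi}}(\mu^N)$ as in the statement.

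Next I would chain four elementary inequalities. By optimality of $\pi^*$, $v^{\pi^*}(\mu^N)=\sup_{\pi}v^{\pi}(\mu^N)\ge v^{\widetilde{\pi}}(\mu^N)$, and Theorem~\ref{NagentMFC} applied to the particular policy $\widetilde{\pi}$ gives $v^{\widetilde{\pi}}(\mu^N)\ge u_N^{\widetilde{\pi}}(\mu^N)-C/\sqrt{N}$; combining these, $v^{\pi^*}(\mu^N)-u^{\widetilde{\pi}}(\mu^N)\ge -C/\sqrt{N}$. For the opposite direction, optimality of $\widetilde{\pi}$ for \reff{rewardagentj} gives $u^{\widetilde{\pi}}(\mu^N)=\sup_{\pi}u_N^{\pi}(\mu^N)\ge u_N^{\pi^*}(\mu^N)$, and Theorem~\ref{NagentMFC} applied to $\pi^*$ gives $u_N^{\pi^*}(\mu^N)\ge v^{\pi^*}(\mu^N)-C/\sqrt{N}$; combining, $u^{\widetilde{\pi}}(\mu^N)-v^{\pi^*}(\mu^N)\ge -C/\sqrt{N}$. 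Putting the two bounds together yields $\big|v^{\pi^*}(\mu^N)-u^{\widetilde{\pi}}(\mu^N)\big|\le C/\sqrt{N}$ with exactly the same constant $C=C(L_P,L_{\tilde r},L_{\Pi},|\Xc|,|\Uc|,\tilde R,\gamma)$ as in Theorem~\ref{NagentMFC}, which already carries the advertised sublinear dependence $\sqrt{|\Xc|}+\sqrt{|\Uc|}$ on the dimensions, since the bound in Theorem~\ref{NagentMFC} is uniform in $\pi$ and each one-sided estimate is invoked only once.

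I do not expect any genuine analytic obstacle: all the substance is already packaged in Theorem~\ref{NagentMFC}. The only two points that deserve a line of justification are (i) that each optimizer may legitimately be evaluated inside the other model, which is exactly what Assumption~\ref{ass: pi} and the shared policy class $\Pi$ guarantee, and (ii) nonemptiness of the two argmax sets, covered by Lemma~\ref{lemma:exist_policy} for \reff{mfc_objective_2} and assumed for \reff{rewardagentj}. Should one wish to avoid assuming an exact maximizer $\widetilde{\pi}$, the identical argument run with $\eta$-optimal policies produces $\big|v^{\pi^*}(\mu^N)-\sup_{\pi}u_N^{\pi}(\mu^N)\big|\le C/\sqrt{N}+\eta$ for every $\eta>0$, which gives the same conclusion upon letting $\eta\to 0$.
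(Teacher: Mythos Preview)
Your proof is correct and follows essentially the same sandwich argument as the paper: apply the uniform bound of Theorem~\ref{NagentMFC} once to $\pi^*$ and once to $\widetilde{\pi}$, then use the optimality of each policy in its own problem to close the two-sided estimate. The additional bookkeeping you include (checking that both optimizers lie in the common policy class and the $\eta$-optimal variant) is sound but not needed for the paper's version.
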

Corollary \ref{corr:optimal_value_app} follows directly from Theorem \ref{NagentMFC} and the proof is deferred to Appendix \ref{app:proof}.


\begin{proof}[Proof of Theorem \ref{NagentMFC}]
First, by \reff{equ: r}
\begin{eqnarray*}
u_N^{\pi}(\mu^N)
&=&\frac{1}{N} \sum_{j=1}^N \sum_{t=0} \gamma^t \E\big[\tilde r(x_t^{j, N}, \mu_t^N, u_t^{j, N}, \nu_t^N)\big]  - \frac{1}{N} \sum_{j=1}^N \sum_{t=0} \gamma^t \E\big[\tilde r(x_t^{j, N}, \mu_t^N, u_t^{j, N}, \tilde\nu_t^N)\big] \\
& & \;\;+\;  \sum_{t=0}^\infty \gamma^t \E\big[r(\mu_t^N, \pi_t(\mu_t^N))\big],\\
v^\pi(\mu^N) &=& \sum_{t=0}^\infty \gamma^t \E[\tilde r(x_t, \mu_t, u_t, \nu_t)] = \sum_{t=0}^\infty \gamma^t r(\mu_t, \pi_t(\mu_t)),
\end{eqnarray*}
where  $\tilde \nu_t^N(u): = \sum_{x \in \Xc} \pi_t(\mu, x)(u)\mu_t^N(x)  = \frac{1}{N} \sum_{j=1}^N \pi_t(\mu_t^N, x_t^{j,N})(u)$.

By the continuity of $r$ from Lemma \ref{lemma:cont_r} and Assumption \ref{ass:r_MFC}, 
\begin{eqnarray*}
& & \sup_{\pi} \Big|u_N^\pi(\mu^N) - v^\pi(\mu^N)\Big| \\
&\leq& (\tilde R + 2 L_{\tilde r})\sum_{t=0}^\infty \gamma^t \sup_{\pi} \Big( \E \Big[\|\mu_t^{N, \pi} - \mu_t^\pi\|_1  \Big] + \E\Big[\|\pi_t(\mu_t) - \pi_t(\mu_t^N)\|_1\Big]\Big)\\
& & + L_{\tilde r} \sum_{t=0}^\infty \gamma^t \sup_{\pi}\E\Big[\|\nu_t^{N, \pi} - \tilde \nu_t^{N, \pi}\|_1\Big]\\
 & \leq & (\tilde R + 2 L_{\tilde r}) (1 + L_{\Pi}) \sum_{t=0}^\infty \gamma^t \sup_{\pi} \E \Big[\|\mu_t^{N, \pi} - \mu_t^\pi\|_1  \Big] + L_{\tilde r} \sum_{t=0}^\infty \gamma^t \sup_{\pi}\E\Big[\|\nu_t^{N, \pi} - \tilde \nu_t^{N, \pi}\|_1\Big].
\end{eqnarray*}
To prove \reff{equvalueappro}, it is sufficient to estimate $\delta_t^{1, N}:=\sup_{\pi}\E[\|\mu_t^{N, \pi} - \mu_t^\pi\|_1]$ and $\delta_t^{2, N}:=\sup_{\pi} \E[\|\nu_t^{N, \pi} - \tilde \nu_t^{N, \pi}\|_1$.
First, we show that $\delta_t^{2, N} = \mathcal{O}(\frac{1}{\sqrt{N}})$. Denote for any $\nu \in \Pc(\Uc)$ and $f: \Uc \to \R$, $\nu(f): = \sum_{u \in \Uc} f(u)\nu(u)$. Then for any $t \geq 0$
{ 

\begin{eqnarray} \label{deltanu}
& & \E\left[\left\|\tilde\nu_t^N - \nu_t^N\right\|_1\right] = \E \left[\E\left[\left\|\tilde\nu_{t}^N - \nu_t^N\right\|_1 \bigg| x_t^{1, N}, \cdots, x_t^{N, N}\right]\right]\\
&=& \E\left[ \E\left[ \sup_{f: \Uc \to \{-1, 1\}} \big(\tilde\nu_t^N(f) - \nu_t^N(f)\big) \bigg| x_t^{1, N}, \cdots, x_t^{N, N} \right]\right] \nonumber\\
&=& \E\left[\E\left[\sup_{f: \Uc \to \{-1, 1\}} \frac{1}{N}\sum_{j=1}^N \sum_{u \in \Uc} \pi_t(\mu_t^N, x_t^{j,N})(u)f(u) - \frac{1}{N}\sum_{j=1}^N f(u_t^{j,N}) \bigg| x_t^{1, N}, \cdots, x_t^{N, N}\right]\right], \nonumber
\end{eqnarray} 
where the first equality is by law of total expectation and the last equality is by the definitions of $\tilde\nu_{t}^N$ and $\nu_t^N$.
Now consider a fixed $f: \Uc \to \{-1, 1\}$. Conditioned on $x_t^{1, N}, \cdots, x_t^{N, N}$,  $\{u_t^{j,N}\}_{j=1}^N$ is a sequence of independent random variables with $u_t^{j,N}\sim\pi_t(\mu_t^N, x_t^{j,N})(\cdot)$. Therefore, conditioned on $x_t^{1, N}, \cdots, x_t^{N, N}$, $\left\{\sum_{u \in \Uc} \pi_t(\mu_t^N, x_t^{j,N})(u)f(u) - f(u_t^{j,N})\right\}_{j=1}^N$ is a sequence of independent mean-zero random variables bounded in $[-2, 2]$. The boundedness further implies that each $\sum_{u \in \Uc} \pi_t(\mu_t^N, x_t^{j,N})(u)f(u) - f(u_t^{j,N})$ is a sub-Gaussian random variable with variance bounded by $4$. (See Chapter 2 of \cite{wainwright2019high} for the general introduction to sub-Gaussian random variables.)
Meanwhile, the independence implies that conditioned on $x_t^{1, N}, \cdots, x_t^{N, N}$,
\begin{equation*}
    \frac{1}{N}\sum_{j=1}^N \sum_{u \in \Uc} \pi_t(\mu_t^N, x_t^{j,N})(u)f(u) - \frac{1}{N}\sum_{j=1}^N f(u_t^{j,N})
\end{equation*}
is a mean-zero sub-Gaussian random variable with variance $\frac 4 N$. In general, for a sequence of mean-zero sub-Gaussian random variables $\{X_i\}_{i=1}^M$ with parameter $\sigma^2$, by Eqn.(2.66) in \cite{wainwright2019high}, we have
\begin{equation*}
    \E\left[\sup_{i=1,\cdots,M}X_i\right]\leq\sqrt{2\sigma^2\ln(M)}.
\end{equation*}
Therefore, conditioned on $x_t^{1, N}, \cdots, x_t^{N, N}$, 
\begin{equation*}
    \E\left[\sup_{f: \Uc \to \{-1, 1\}} \frac{1}{N}\sum_{j=1}^N \sum_{u \in \Uc} \pi_t(\mu_t^N, x_t^{j,N})(u)f(u) - \frac{1}{N}\sum_{j=1}^N f(u_t^{j,N}) \bigg| x_t^{1, N}, \cdots, x_t^{N, N}\right]\leq\sqrt{8\ln(2)|\Uc|/N}
\end{equation*}
holds since we have in total $2^{|\Uc|}$ different choices for $f: \Uc \to \{-1, 1\}$ when taking the supremum. Thus, following \eqref{deltanu}, we have  
\begin{eqnarray}\label{eq:bound}
\delta_t^{2, N}= \sup_{\pi}\E\left[\left\|\tilde\nu_t^N - \nu_t^N\right\|_1\right]\leq\sqrt{8\ln(2)|\Uc|/N}.  
\end{eqnarray}}

\newpage

Second, we estimate $\delta_t^{1,N}$ and claim that $\delta_t^{1,N} = \mathcal{O}(\frac{1}{\sqrt{N}})$. This is done by induction.
The claim holds for $t=0$ because $\delta_0^{1,N} = 0$. Suppose the claim holds for $t$ and consider $t + 1$. 


{ Given $x_t^{1, N}, \cdots, x_t^{N, N}$, $\mu_t^N=\frac{1}{N}\sum_{j=1}^N \delta_{x_t^{j, N}}$ and policy $\pi_t(\mu_t^N)$ at time $t$, for any $\nu\in\Pc(\Uc)$, let $\mu_t^N P_{\mu_t^N, \nu}$ denote a $\Pc(\Xc)$-valued random variable, with
\begin{equation*}
    \mu_t^N P_{\mu_t^N, \nu}(x):=\frac{1}{N}\sum_{j=1}^N P(x_t^{j,N}, \mu^N_t, u_t^{j,N}, \nu)(x), \quad u_t^{j,N}\sim\pi_t(\mu_t^N, x_t^{j,N}).
\end{equation*}
}
{ 
We consider the following decomposition,

\beq \label{equmuNmuerror}
\E\left[\|\mu_{t + 1}^N - \mu_{t + 1}\|_1\right]
&\leq& \underbrace{\E\left[\left\|\mu_{t + 1}^N - \mu_t^N P_{\mu_t^N, \nu_t^N}\right\|_1\right]}_{(I)} + \underbrace{\E\left[\left\|\mu_t^N P_{\mu_t^N, \nu_t^N} - \mu_t^N P_{\mu_t^N, \tilde\nu_t^N}\right\|_1\right]}_{(II)} \nonumber\\
& & \qquad\;\;\; +\;
\underbrace{\E\left[\left\|\mu_t^N P_{\mu_t^N, \tilde\nu_t^N}-\Phi(\mu_t^N, \pi_t(\mu_t^N))\right\|_1\right]}_{(III)} +
\underbrace{\E\left[\left\|\Phi(\mu_t^N, \pi_t(\mu_t^N)) - \mu_{t + 1}\right\|_1\right]}_{(IV)}.
\enq

Bounding (I) in RHS of \reff{equmuNmuerror}: We proceed the similar argument as \reff{deltanu},
\begin{eqnarray*}
& & \E\left[\left\|\mu_{t + 1}^N - \mu_t^N P_{\mu_t^N, \nu_t^N}\right\|_1\right] =  \E \left[\E\left[\left\|\mu_{t + 1}^N - \mu_t^N P_{\mu_t^N, \nu_t^N}\right\|_1 \left| x_t^{1, N}, \cdots, x_t^{N, N},u_t^{1, N}, \cdots, u_t^{N, N}\right]\right]\right.\\
&=& \E\left[\E\left[\sup_{f: \Xc \to \{-1, 1\}} \frac{1}{N}\sum_{j=1}^N f(x_{t+1}^{j,N})-\frac{1}{N}\sum_{j=1}^N \sum_{x \in \Xc}P(x_t^{j,N}, \mu^N_t, u_t^{j,N}, \nu^N_t)(x)f(x)  \bigg| x_t^{1, N}, \cdots, x_t^{N, N}, u_t^{1, N}, \cdots, u_t^{N, N}\right]\right]\\
&\leq& \sqrt{8\ln(2)|\Xc|/N}.
\end{eqnarray*}

Bounding (II) in RHS of \reff{equmuNmuerror}:
\begin{eqnarray*}
& & \E\left[\left\|\mu_t^N P_{\mu_t^N, {\nu}_t^N} - \mu_t^N P_{\mu_t^N, \tilde{\nu}_t^N}\right\|_1\right]
= \E\left[\left\| \frac{1}{N}\sum_{j=1}^N P(x_t^{j,N}, \mu^N_t, u_t^{j,N}, {\nu}^N_t) - \frac{1}{N}\sum_{j=1}^N  P(x_t^{j,N}, \mu^N_t, u_t^{j,N}, \tilde{\nu}^N_t)\right\|_1\right]\\
&\leq&\frac{1}{N}\sum_{j=1}^N \E\left[\left\|  P(x_t^{j,N}, \mu^N_t, u_t^{j,N}, {\nu}^N_t) -   P(x_t^{j,N}, \mu^N_t, u_t^{j,N}, \tilde{\nu}^N_t)\right\|_1\right] \\
&\leq& L_P\cdot \mathbb{E}\left[\left\| {\nu}^N_t- \tilde{\nu}^N_t\right\|_1\right] \leq L_P\sqrt{8\ln(2)|\Uc|/N},
\end{eqnarray*}
in which the second last inequality holds by the Lipschitz property from Assumption \ref{ass:P_MFC} and the last inequality holds by \eqref{eq:bound}.

Bounding (III) in RHS of \reff{equmuNmuerror}:
\begin{eqnarray*}
& & \E\left[\left\| \mu_t^N P_{\mu_t^N, \tilde\nu_t^N} - \Phi(\mu_t^N, \pi_t(\mu^N_t))\right\|_1\right]\\
&=& \E\biggl[\E\biggl[\sup_{g: \Xc \to \{-1, 1\}} \frac{1}{N}\sum_{j=1}^N \sum_{x \in \Xc} P(x_t^{j,N}, \mu_t^N, u_t^{j,N}, \tilde\nu_t^N)(x) g(x) \\
&  & -  \frac{1}{N}\sum_{j=1}^N \sum_{u \in \Uc} \sum_{x \in \Xc} P(x_t^{j,N}, \mu_t^N, u, \tilde\nu_t^N)(x)\pi_t(\mu_t^N, x_t^{j,N})(u) g(x)\biggl|x_t^{1, N}, \ldots, x_t^{N, N}\biggl]\biggl]
\end{eqnarray*}
For a fixed $g: \Xc \to \{-1, 1\}$, conditioned on $x_t^{1,N},\cdots,x_t^{N,N}$,
\begin{equation*}
    \left\{\sum_{x \in \Xc} P(x_t^{j,N}, \mu_t^N, u_t^{j,N}, \tilde\nu_t^N)(x) g(x) - \sum_{u \in \Uc} \sum_{x \in \Xc} P(x_t^{j,N}, \mu_t^N, u, \tilde\nu_t^N)(x)\pi_t(\mu_t^N, x_t^{j,N})(u) g(x)\right\}_{j=1}^N
\end{equation*}
are independent mean-zero sub-Gaussian random variables. Meanwhile, since by definition, we have for each $j=1,\cdots,N$, $\sum_{x \in \Xc} P(x_t^{j,N}, \mu_t^N, u_t^{j,N}, \tilde\nu_t^N)(x)=1$, it is easy to show that $\sum_{x \in \Xc} P(x_t^{j,N}, \mu_t^N, u_t^{j,N}, \tilde\nu_t^N)(x) g(x)$ is bounded by $[-1, 1]$. Therefore, using the same argument  applied in the proof of \eqref{eq:bound}, we can show that
\begin{equation*}
    \E\left[\left\| \mu_t^N P_{\mu_t^N, \tilde\nu_t^N} - \Phi(\mu_t^N, \pi_t(\mu^N_t))\right\|_1\right]\leq\sqrt{8\ln(2)|\Xc|/N}.
\end{equation*}
}



Bounding (IV) in RHS of \reff{equmuNmuerror}:
\begin{eqnarray*}
\E\big[\|\Phi(\mu_t^N, \pi_t(\mu_t^N)) - \mu_{t + 1}\|_1\big]= \E\big[\|\Phi(\mu_t^{N}, \pi_t(\mu_t^N)) - \Phi(\mu_t, \pi_t(\mu_t))\|_1\big]
\leq (2 L_P + 1) (1 + L_{\Pi}) \E\big[\|\mu_t^N - \mu_t\|_1\big],
\end{eqnarray*}
where the first equality is from the flow of probability measure $\mu_{t + 1} = \Phi(\mu_t, \pi_t(\mu_t))$ by Lemma \ref{lemma:flowmut}, and the first inequality is by the continuity of $\Phi$ from Lemma \ref{lemma:cont_phi}.\\  

By taking supremum over $\pi$ on both sides of \reff{equmuNmuerror}, we have $\delta_{t+1}^{1,N} \leq (2L_P + 1)(1 +L_{\Pi}) \delta_t^{1, N} + { (L_P\sqrt{|\Uc|} + 2\sqrt{|\Xc|})\sqrt{8\ln(2)/N}}$, hence $\delta_t^{1, N} \leq \frac{{ (L_P\sqrt{|\Uc|} + 2\sqrt{|\Xc|})}}{(2L_P + 1)(1 + L_{\Pi}) -1}\Big((2 L_P + 1)^t(1 + L_{\Pi})^t  - 1\Big){ \sqrt{8\ln(2)/N}}.$ Therefore
\begin{eqnarray*}
& &  \sup_{\pi} \Big|u_N^\pi(\mu^N) - v^\pi(\mu^N)\Big| \leq (\tilde R + 2 L_{\tilde r})\sum_{t=0}^\infty \gamma^t\delta_t^{1, N} +  L_{\tilde r} \sum_{t=0}^\infty \gamma^t \delta_t^{2, N}\\
&\leq &  \Big\lbrace \frac{ (\tilde R + 2 L_{\tilde r}){ (L_P \sqrt{|\Uc|} + 2\sqrt{|\Xc|})}}{(2L_P + 1)(1 + L_{\Pi}) -1} \Big(\frac{1}{1- (2L_P +1) (1 +L_{\Pi})\gamma} - \frac{1}{1 -\gamma}\Big) + \frac{{ {\sqrt{|\Uc|}}} L_{\tilde r}}{1 -\gamma} \Big\rbrace { \sqrt{8\ln(2)/N}}.
\end{eqnarray*}

This proves \eqref{equvalueappro}.
\end{proof}

\subsection{Q-function Approximation under Learning
}\label{sec:learning_approx}
In this section we show that, with $\mathcal{O}(\log(1/\epsilon))$ samples and with $\epsilon$ the size of $\epsilon$-set, the kernel-based Q function from Algorithm \ref{QL_CD} provides an approximation to the Q function of  cooperative MARL, with an error of $\mathcal{O}(\epsilon+\frac{1}{\sqrt{N}})$,

For the \eqref{rewardagentj} problem specified in Section \ref{sec:value_approx} { and given the initial states $x^{j,N}$ and actions $u^{j,N}$ from all agents ($j=1,2,\dots,N$)}, let us
define the corresponding Q function,
\begin{eqnarray}\label{eq:Q_N}
Q^{\pi}_N(\mu^N,h^N) =  \frac{1}{N} \sum_{j=1}^N \tilde r(x^{j, N}, \mu^N, u^{j, N}, { \nu^N}) +\frac{1}{N} \sum_{j=1}^N\E\Big[\sum_{t=1}^\infty \gamma^t \tilde r(x_t^{j, N}, \mu_t^N, u_t^{j, N}, \nu_t^N)\Big]
\end{eqnarray}
subject to
\begin{eqnarray}
\;\; & x_{1}^{j, N} \sim P(x^{j, N}, \mu^N, u^{j, N}, { \nu^N}),\nonumber\\
 \;\; & x_{t + 1}^{j, N} \sim P(x_t^{j, N}, \mu_t^N, u_t^{j, N}, \nu_t^N), \; u_t^{j, N} \sim \pi(\mu_t^N, x_t^{j, N}), \;\;\; 1 \leq j \leq N, \,\,{\rm and }\,\, t \ge 1.\nonumber
\end{eqnarray}
where $\mu^N(x) = \frac{\sum_{j=1}^N 1({x^{j, N}}=x)}{N}$, { $\nu^N(u) = \frac{\sum_{j=1}^N 1(u^{j,N}=u)}{N}$} and $h^N(x)(u) = \frac{\sum_{j=1}^N 1(x^{j, N}=x;\,u^{j, N}=u)}{\sum_{j=1}^N 1(x^{j, N}=x)}$ with the convention $\frac{0}{0}=0$, and define
\begin{eqnarray}\label{eq:Q_N_optimal}
 Q_N(\mu^N,h^N) = \sup_{\pi}Q^{\pi}_N(\mu^N,h^N).
\end{eqnarray}

\begin{theorem}\label{thm:Q_approximation_Nplayer}Fix $\epsilon>0$. Assume the same conditions as in Theorem \ref{thm:conv_mfc}, Theorem \ref{NagentMFC} and Corollary \ref{corr:optimal_value_app}. Then 
there exists some $\widetilde{C} =\widetilde{C}(L_P,L_{\Pi},|\mathcal{X}|,{ |\mathcal{U}|},\tilde{R},L_{\tilde{r}},\gamma)>0 $, {depending on the dimensions of the state and action spaces in a sublinear order $(\sqrt{|\mathcal{X}|}+\sqrt{\mathcal|\Uc|})$}, such that
\begin{eqnarray}
||Q_N-\Gamma_K\widehat{Q}_\epsilon||_\infty\leq\frac{L_r + 2\gamma N_K L_K V_{\max} L_{\Phi}}{1-\gamma}\cdot\epsilon+\frac{2 L_r}{(1-\gamma L_{\Phi})(1-\gamma)}\cdot\epsilon +\frac{\widetilde{C}}{\sqrt{N}}.
\end{eqnarray}
\end{theorem}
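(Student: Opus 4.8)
\textbf{Proof proposal for Theorem~\ref{thm:Q_approximation_Nplayer}.}
The plan is to split the error by the triangle inequality and reduce the new content to a mean‑field $Q$‑approximation estimate of the same type as Theorem~\ref{NagentMFC}. Since $Q$ of \eqref{equmfcQ} and $\Gamma_K\widehat{Q}_\epsilon$ are defined on all of $\Cc$ while $Q_N$ is defined on the finite set of pairs $(\mu^N,h^N)$ realizable from $N$ agents, we write
\[
\|Q_N-\Gamma_K\widehat{Q}_\epsilon\|_\infty\le\|Q_N-Q\|_\infty+\|Q-\Gamma_K\widehat{Q}_\epsilon\|_\infty ,
\]
and bound the second term by the right‑hand side of \eqref{equ:error_bound} via Theorem~\ref{thm:conv_mfc}, whose hypotheses are assumed. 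It then remains to prove $\|Q_N-Q\|_\infty\le\widetilde C/\sqrt N$ uniformly over realizable $(\mu^N,h^N)$.

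The key simplification is that the $t=0$ contributions to $Q_N^\pi(\mu^N,h^N)$ and $Q^\pi(\mu^N,h^N)$ coincide \emph{exactly}: by the definitions of $h^N$ and $\nu^N$, the fraction of agents in state $x$ playing $u$ is $\mu^N(x)h^N(x)(u)$ and $\nu^N=\nu(\mu^N,h^N)$, hence $\tfrac1N\sum_j\tilde r(x^{j,N},\mu^N,u^{j,N},\nu^N)=r(\mu^N,h^N)$ and $\tfrac1N\sum_jP(x^{j,N},\mu^N,u^{j,N},\nu^N)=\Phi(\mu^N,h^N)$; in particular $\E[\mu_1^N\mid\cdot]=\Phi(\mu^N,h^N)$, with no sampling error at $t=0$. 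Conditioning on $\{x_1^{j,N}\}_j$ and using symmetry gives $Q_N^\pi(\mu^N,h^N)=r(\mu^N,h^N)+\gamma\,\E[u_N^\pi(\mu_1^N)]$, while $Q^\pi(\mu^N,h^N)=r(\mu^N,h^N)+\gamma\,v^{\sigma(\pi)}(\Phi(\mu^N,h^N))$ for the shifted policy $\sigma(\pi)$. Taking suprema over $\pi$ — using $v=\sup_\pi v^\pi$, $u_N=\sup_\pi u_N^\pi$, the independence of $\mu_1^N$ from $\pi$, and the existence of a stationary policy optimal from every initial profile for the finite $N$‑agent MDP to interchange $\sup_\pi$ and $\E$ (equivalently, the relation $Q(\mu,h)=r(\mu,h)+\gamma v(\Phi(\mu,h))$ from \eqref{BellmanQ}–\eqref{equ:relationvQ}) — yields
\[
\big|Q_N(\mu^N,h^N)-Q(\mu^N,h^N)\big|\le\gamma\,\E\big[\,|u_N(\mu_1^N)-v(\mu_1^N)|\,\big]+\gamma\,\big|\E[v(\mu_1^N)]-v(\Phi(\mu^N,h^N))\big|.
\]
The first term is $\le\gamma C/\sqrt N$ uniformly by Theorem~\ref{NagentMFC} (whose bound is uniform in the initial empirical distribution). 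For the second, use that $v$ is Lipschitz — with constant of order $L_r/(1-\gamma L_\Phi)$, established as in the proofs of Lemma~\ref{continuity-Qc} and Lemma~\ref{lemma:shenqi_Q} — together with $\E\|\mu_1^N-\Phi(\mu^N,h^N)\|_1=O(\sqrt{|\Xc|/N})$, which is precisely the sub‑Gaussian concentration of an empirical measure about its mean already carried out for term $(I)$ of \eqref{equmuNmuerror}. Adding the contributions gives $\|Q_N-Q\|_\infty\le\widetilde C/\sqrt N$, and tracking constants shows $\widetilde C$ depends on the dimensions only through $\sqrt{|\Xc|}+\sqrt{|\Uc|}$, since both $C$ from Theorem~\ref{NagentMFC} and the Lipschitz constant of $v$ have this property.

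An alternative route, which avoids invoking Lipschitz continuity of $v$, is to re‑run the induction in the proof of Theorem~\ref{NagentMFC} directly on the trajectories defining $Q_N^\pi$ and $Q^\pi$: the prescribed (not sampled) initial action profile makes $\delta_0^{2,N}=0$ and the only $t=0$ discrepancy is the concentration of $\mu_1^N$, so $\delta_1^{1,N}=O(1/\sqrt N)$; for $t\ge1$ the recursions for $\delta_t^{1,N}$ and $\delta_t^{2,N}$ are identical to those in Theorem~\ref{NagentMFC}, hence $\delta_t^{1,N},\delta_t^{2,N}=O(1/\sqrt N)$ for all $t$, and summing $\gamma^tL_r(1+L_\Pi)\delta_t^{1,N}+\gamma^tL_{\tilde r}\delta_t^{2,N}$ over $t\ge1$ bounds $\sup_\pi|Q_N^\pi-Q^\pi|$. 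I expect the main obstacle to be bookkeeping rather than conceptual: one must ensure every estimate is uniform over $\pi\in\Pi$ and over the finitely many realizable pairs $(\mu^N,h^N)$, and carefully justify the $\sup$–$\E$ interchange through the uniformly optimal stationary policy of the finite $N$‑agent MDP; no ingredients beyond those already used in the proofs of Theorems~\ref{thm:conv_mfc} and~\ref{NagentMFC} appear to be required.
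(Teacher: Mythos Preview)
Your proposal is correct and follows essentially the same route as the paper: split via the triangle inequality through $Q$, invoke Theorem~\ref{thm:conv_mfc} for $\|Q-\Gamma_K\widehat Q_\epsilon\|_\infty$, observe that the $t=0$ reward terms coincide exactly so that $Q-Q_N$ reduces to $\gamma$ times a value-function discrepancy at time $1$, then control this by (i) the optimal-value approximation (Corollary~\ref{corr:optimal_value_app}, which is what your first term needs) and (ii) the concentration $\E\|\mu_1^N-\Phi(\mu^N,h^N)\|_1=O(\sqrt{|\Xc|/N})$.

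The one place you diverge slightly is in handling term~(ii). You invoke Lipschitz continuity of the optimal $v$ with constant $L_r/(1-\gamma L_\Phi)$ (from the argument of Lemma~\ref{continuity-Qc}) and bound $|\E[v(\mu_1^N)]-v(\Phi(\mu^N,h^N))|$ directly. The paper instead fixes the optimal policy $\pi^*$, compares the two deterministic mean-field trajectories started from $\mu_1^N$ and $\Phi(\mu^N,h^N)$ under $\pi^*$, and propagates the initial discrepancy by induction, picking up an extra $(1+L_\Pi)$ factor in the contraction rate since $\pi^*$ is only $L_\Pi$-Lipschitz. Your abstract route is cleaner and yields a slightly sharper constant (no $L_\Pi$ in that term); the paper's explicit trajectory comparison makes the dependence on the assumptions of Theorem~\ref{NagentMFC} more visible. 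Both are valid and neither requires any idea beyond what is already in the paper.
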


Combining Theorem \ref{thm:sample_complexity} and Theorem \ref{thm:Q_approximation_Nplayer} implies the following:
fix any $\epsilon>0$, there exists an integer $D_{\epsilon}\in \mathbb{N}$ such that Algorithm \ref{QL_CD} outputs a kernel-based Q function with $C\log(1/\epsilon)$ samples. With high probability, this kernel-based Q function is  $\epsilon$ close  to the Q function of MARL when the agent number $N>D_{\epsilon}$. Here $C= C( L_P,L_{\Pi},|\mathcal{X}|,{ |\Uc|}, \tilde{R},L_{\tilde{r}},\gamma)$  is sublinear with respect to $|\mathcal{X}|$ and $|\mathcal{U}|$ and independent of the number of agents $N$. 

\begin{proof}[Proof of Theorem \ref{thm:Q_approximation_Nplayer}]
First we have
\begin{eqnarray}
 Q_N(\mu^N,h^N) =  \frac{1}{N} \sum_{j=1}^N \tilde r(x^{j, N}, \mu^N, u^{j, N}, { \nu^N}) +\gamma \sup_{\pi}\mathbb{E}[u_N^{\pi}(\mu^N_1)]
\end{eqnarray}
On the other hand,  by the definitions of $Q$ in \eqref{equmfcQ}, $\mu^N$ and $h^N$,
\beq
Q(\mu^N, h^N) &=&  \frac{1}{N} \sum_{j=1}^N \tilde r(x^{j, N}, \mu^N, u^{j, N},  { \nu^N})+\sup_{\pi\in\Pi}\bigg[\sum_{t=1}^{\infty} \gamma^{t} r(\mu_t, h_t)\bigg|\mu_1=\Phi(\mu^N, h^N)\bigg]\nonumber\\
&=&  \frac{1}{N} \sum_{j=1}^N \tilde r(x^{j, N}, \mu^N, u^{j, N},  { \nu^N})+\gamma\,\sup_{\pi\in\Pi}v^{\pi}(\Phi(\mu^N, h^N))
\enq
with $h_t = \pi_t(\mu_t)$. Therefore,
\begin{eqnarray}\label{eqn:QN_bound_main0}
&&|Q(\mu^N, h^N) - Q_N(\mu^N, h^N)| = \gamma \left|\sup_{\pi\in\Pi}v^{\pi}(\Phi(\mu^N, h^N)) -\sup_{\pi}\mathbb{E}[u_N^{\pi}(\mu^N_1)] \right|\nonumber\\
&\leq &\gamma \left|v^{\pi^*}(\Phi(\mu^N, h^N)) -\mathbb{E}[v^{\pi^*}(\mu^N_1)] \right| + \gamma\left| \mathbb{E}\left[v^{\pi^*}(\mu^N_1) -u_N^{\widetilde{\pi}}(\mu^N_1) \right]\right|\label{eq:Q_bound_1}
\end{eqnarray}
where $\pi^* \in \arg\sup_{\pi \in \Pi} v^{\pi}$, $\widetilde{\pi} \in \arg\sup_{\pi \in \Pi} u_N^{\pi}$, and the expectation in \eqref{eq:Q_bound_1} is taking with respect to $\mu^N_1$.

For the second term in \eqref{eq:Q_bound_1},
\begin{eqnarray}\label{eqn:QN_bound_main1}
\left| \mathbb{E}\left[v^{\pi^*}(\mu^N_1) -u_N^{\widetilde{\pi}}(\mu^N_1) \right]\right| \leq \mathbb{E} \left| \left[v^{\pi^*}(\mu^N_1) -u_N^{\widetilde{\pi}}(\mu^N_1) \right]\right| \leq \frac{C}{\sqrt{N}},
\end{eqnarray}
in which the first inequality holds by convexity and the second inequality holds due to Corollary \ref{corr:optimal_value_app}.

For the first term in \eqref{eq:Q_bound_1},
\begin{eqnarray}
&&\left|v^{\pi^*}(\Phi(\mu^N, h^N)) -\mathbb{E}_{\mu_1^N}[v^{\pi^*}(\mu^N_1)] \right|\nonumber\\
&\leq& (\tilde{R}+2L_{\tilde{r}}) \,\sum_{t=0}^{\infty} \gamma^t \E \left[\left\|\mu_t^{\pi^*}-\overline{\mu}_t^{\pi^*}\right\|_1 \right]  +L_{\tilde{r}}\, \sum_{t=0}^\infty \gamma^t \E\Big[\left\|\nu_t^{\pi^*} - \overline{\nu}_t^{ \pi^*}\right\|_1\Big]\label{eq:v_bound_1}\\
&\leq &{ (\tilde{R}+3L_{\tilde{r}}+L_{\tilde{r}}L_\Pi)} \,\sum_{t=0}^{\infty} \gamma^t \E \left[\left\|\mu_t^{\pi^*}-\overline{\mu}_t^{\pi^*}\right\|_1 \right], \label{eq:v_bound_2}
\end{eqnarray}
in which  $\mu_{t+1}^{\pi^*} = \Phi(\mu_{t}^{\pi^*},\pi^*(\mu_{t}^{\pi^*}))$ with initial condition $\mu_0^{\pi^*}=\Phi(\mu^N,h^N)$, $\overline{\mu}_{t+1}^{\pi^*} = \Phi(\overline{\mu}_{t}^{\pi^*},\pi^*(\overline{\mu}_{t}^{\pi^*}))$ with initial condition $\overline{\mu}_0^{\pi^*}=\mu_1^N$. In addition, $\nu_t^{\pi^*}(u) = \sum_{x\in \mathcal{X}}\pi^*(\mu_t^{\pi^*},x)(u)\mu_t^{\pi^*}(x)$ and $\overline\nu_t^{\pi^*}(u) = \sum_{x\in \mathcal{X}}\pi^*(\overline{\mu}_t^{\pi^*}, x)(u)\overline{\mu}_t^{\pi^*}(x)$. \eqref{eq:v_bound_1} holds by the continuity of $r$ from Lemma \ref{lemma:cont_r} and Assumption \ref{ass:r_MFC}. { \eqref{eq:v_bound_2} holds since by Lemma \ref{lemma:cont_nu} and Assumption \ref{ass: pi},
\begin{eqnarray*}
\left\|\nu_t^{\pi^*}-\overline{\nu}_t^{\pi^*}\right\|_1 &\leq& \left\|\mu_t^{\pi^*}-\overline{\mu}_t^{\pi^*}\right\|_1 + \max_{x\in\Xc}\left\|\pi^*(\mu_t^{\pi^*}, x)-\pi^*(\overline{\mu}_t^{\pi^*}, x)\right\|_1\leq
\left(1+L_{\Pi}\right)\left\|\mu_t^{\pi^*}-\overline{\mu}_t^{\pi^*}\right\|_1.
\end{eqnarray*}
For $t=0$, 
\begin{eqnarray}\label{eq:initial_ineq}
& & \E\left[\left\|\mu_0^{\pi^*}-\overline\mu_0^{\pi^*}\right\|_1\right] = \E\left[\left\|\mu_1^N-\Phi(\mu^N,h^N)\right\|_1\right] \\
&= &\E \left[\left\|\mu_1^N-\frac{1}{N}\sum_{j=1}^N \sum_{u \in \Uc}P(x^{j,N}, \mu^N, u, \nu^N)(x)h^N(x^{j, N})(u) \right\|_1\right]\nonumber \\
&=& \E\left[\sup_{g: \Xc \to \{-1, 1\}} \frac{1}{N} \sum_{j=1}^N g(x_1^{j, N}) - \frac{1}{N}\sum_{j=1}^N \sum_{x \in \Xc}\sum_{u \in \Uc}P(x^{j,N}, \mu^N, u, \nu^N)(x)h^N(x^{j, N})(u) g(x) \right] \nonumber \\
&\leq& \sqrt{8 |\Xc|\ln(2)/N},\nonumber 
\end{eqnarray}
where the second equality is by $\nu^N(u) = \sum_{x \in \Xc}\mu^N(x) h^N(x)(u)$ and by the definition of $\Phi$, and in the last inequality, $\{g(x_1^{j, N}) - \sum_{x \in \Xc}\sum_{u \in \Uc}P(x^{j,N}, \mu^N, u, \nu^N)(x)h^N(x^{j, N})(u) g(x)\}_{j=1}^N$ are independent mean-zero sub-Gaussian random variables bounded by $[-2, 2]$ and thus we proceed the similar arguments as \reff{deltanu}. 
}

We now prove by induction it holds for all $t \ge 0$ that 
\begin{eqnarray}
\label{eq:induction}
    \E\left[\|\mu_t^{\pi^*}-\overline\mu_t^{\pi^*}\|_1\right] \leq ((2L_P+1)( L_{\Pi}+1))^t { \sqrt{8|\Xc|\ln(2)/N}}.
\end{eqnarray}
\eqref{eq:induction} holds when $t=0$ given \eqref{eq:initial_ineq}. Now assume \eqref{eq:induction} holds for $t \leq s$. When $t=s+1$, we have
\begin{eqnarray}
\E\left[\|\mu_{s+1}^{\pi^*}-\overline\mu_{s+1}^{\pi^*}\|_1\right] &=& \E\left[\left\|\Phi(\mu_{s}^{\pi^*},\pi^*(\mu_{s}^{\pi^*}))-\Phi(\overline{\mu}_{s}^{\pi^*},\pi^*(\overline{\mu}_{s}^{\pi^*}))\right\|_1\right]\nonumber\\
&\leq& (2L_P +1) d_{\mathcal{C}} \Bigg(\Big(\mu_{s}^{\pi^*},\pi^*(\mu_{s}^{\pi^*})\Big)\,\,,\Big(\overline{\mu}_{s}^{\pi^*},\pi^*(\overline{\mu}_{s}^{\pi^*})\Big)\Bigg)\nonumber \\
&=& (2L_P +1) \Big(\|\mu_{s}^{\pi^*} - \overline\mu_{s}^{\pi^*})\|_1+\|\pi^*(\mu_{s}^{\pi^*}) - \pi^*(\overline\mu_{s}^{\pi^*})\|_1 \Big)\nonumber\\
&\leq & (2L_P +1)(1+L_{\Pi}) \|\mu_{s}^{\pi^*} - \overline\mu_{s}^{\pi^*}\|_1 \nonumber\\
&\leq& ((2L_P +1)(1+L_{\Pi}))^{s+1} { \sqrt{8|\Xc|\ln(2)/N}},
\end{eqnarray}
where the first inequality holds by Lemma \ref{lemma:cont_phi} and the second inequality holds by Assumption \ref{ass: pi}, and the third inequality holds by induction. Finally when $(2L_P +1)(1+L_{\Pi})\gamma<1$,
\begin{eqnarray}\label{eqn:QN_bound_main2}
\eqref{eq:v_bound_2} &\leq& { (\tilde{R}+3L_{\tilde{r}}+L_{\tilde{r}}L_\Pi)}\sum_{t=0}^{\infty} \sqrt{8 |\Xc|\ln(2)|/N}((2L_P +1)(1+L_{\Pi})\gamma)^t\\
&=& \sqrt{8|\Xc|\ln(2)|/N}{ \frac{\tilde{R}+3L_{\tilde{r}}+L_{\tilde{r}}L_\Pi}{1-(2L_P+1)(1+L_{\Pi})\gamma}}\nonumber.
\end{eqnarray}
Therefore, combining \eqref{eqn:QN_bound_main0}, \eqref{eqn:QN_bound_main1} and \eqref{eqn:QN_bound_main2}, we have proven that there exists some $\widetilde{C} =\widetilde{C}(L_P,L_{\Pi},|\mathcal{X}|,|\Uc|, \tilde{R},L_{\tilde{r}},\gamma)>0 $ such that $\left\|Q - Q_N\right\|_\infty\leq\frac{\widetilde{C}}{\sqrt{N}}$. Here $\widetilde{C}$ {depends on the dimensions of the state and action spaces in a sublinear order $(\sqrt{|\mathcal{X}|}+\sqrt{\mathcal|\Uc|})$} and is independent of the number of agents $N$. 
Theorem \ref{thm:Q_approximation_Nplayer} follows from combining the result above with Theorem \ref{thm:conv_mfc}.
\end{proof}

\section{Experiments}\label{sec:experiments}

We will test the MFC-K-Q algorithm on a network traffic congestion control problem.
In the network there are senders and receivers. Multiple senders share a single communication link which has an unknown and limited bandwidth. When the total sending rates from these senders exceed the shared bandwidth, packages may be lost. Sender streams data packets to the receiver and receives feedback from the receiver on success or failure in the form of packet acknowledgements (ACKs).
(See Figure \ref{fig:network_illustration} for illustration and \cite{JRGST2019} for a similar set-up).  The control problem for each sender is to send the packets  as fast as possible and with the risk of packet loss as little as possible. Given a large interactive population of senders, the exact dynamics of the system and the rewards are unknown, thus it is natural to formulate this control problem in the framework of learning MFC. 

\begin{figure}[H] 
  \centering
  \includegraphics[width=0.5\linewidth]{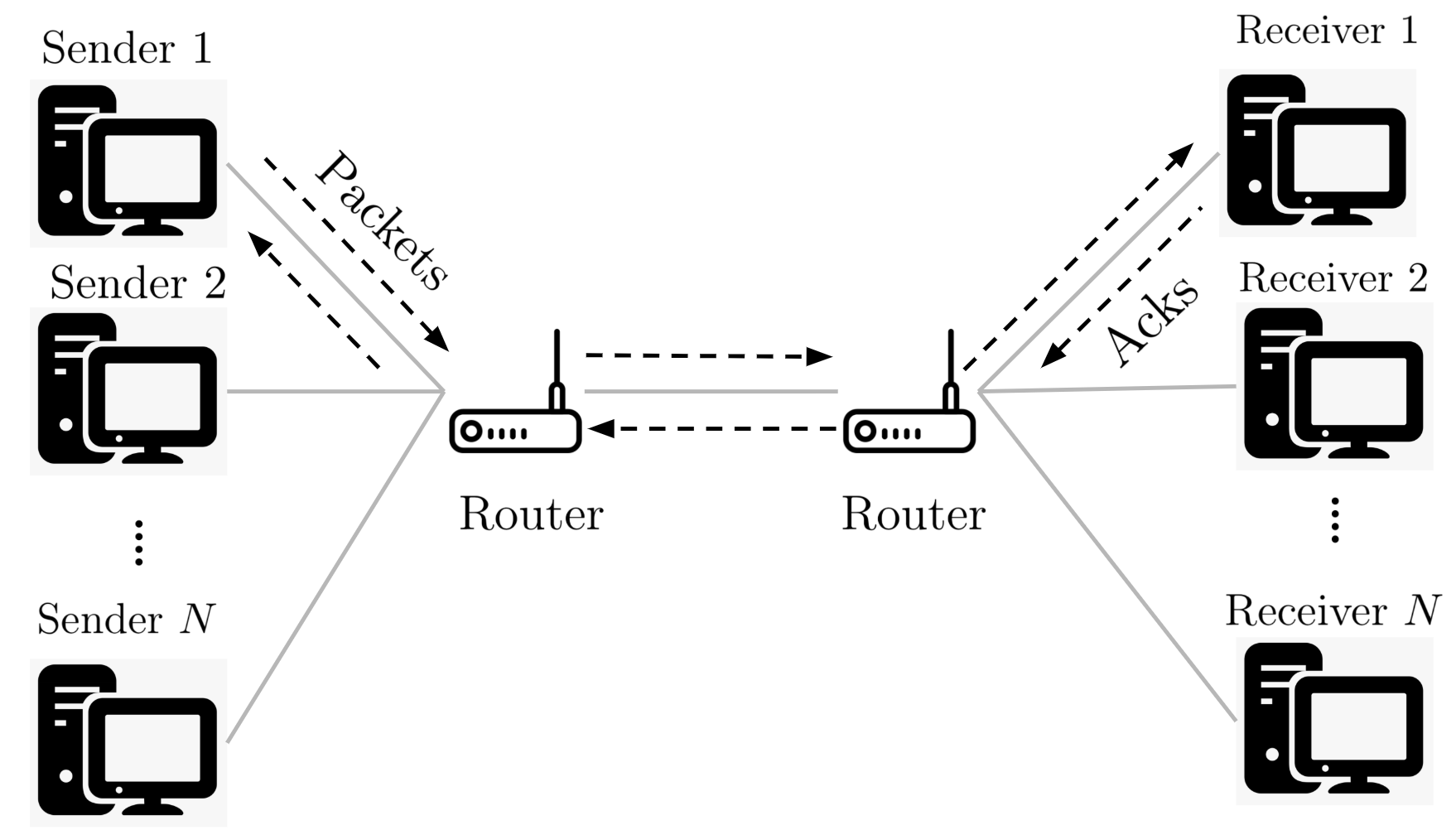}
  \caption{ \label{fig:network_illustration} Multiple network traffic flows sharing the same link.}
   \vskip -0.2in
\end{figure}

\subsection{Set-up}\label{experiment-set-up}
\paragraph{States} For a representative agent in MFC problem with learning, at the beginning of each round $t$, the state $x_t$ is her inventory (current unsent packet units) taking values from $\mathcal{X}=\{0, \ldots, |\Xc|-1\}$. Denote $\mu_t: = \{\mu_t(x)\}_{x \in \mathcal{X}}$ as the population state distribution over $\mathcal{X}$.

\paragraph{Actions}  The action is the sending rate. At the beginning of each round $t$, the agent can adjust her sending rate $u_t$, which remains fixed in $[t,t+1)$. Here we  assume   $u_t\in \mathcal{U}=\{0,\ldots,|\Uc|-1\}$.  Denote $h_t=\{h_t(x)(u)\}_{x\in \mathcal{X},u \in \mathcal{U}}$ as the policy from the central controller.  

\paragraph{Limited bandwidth and packet loss}
A system with $N$ agents has a shared link of unknown bandwidth $cN$ {($c>0$).  In the mean-field limit with $N \rightarrow \infty$, $F_t =\sum_{x \in \mathcal{X}, u \in \mathcal{U}} u {h_t(x)(u)} \mu_t(x)$ is  the average sending rate at time $t$.  If $F_t>c$, with probability $\frac{(F_t-c)}{F_t}$, each agent's packet will be lost.

\paragraph{MFC dynamics}  At time $t+1$, the state of the representative agent moves from $x_t$ to $x_t-u_t$. {Overshooting is not allowed: $u_t\leq x_t$.} Meanwhile, at the end of each round,
there are some packets added to each agent's packet sending queue. The packet fulfillment consists of two scenarios. First a lost package will be added to the original queue. Then once the inventory hits zero,  a random fulfillment with uniform distribution {Unif$(\mathcal{X})$} will be added to her queue. That is, $x_{t+1} = x_t-u_t + u_t 1_{t}(L) + (1-1_{t}(L)1(u_t=x_t)\cdot U_t,$
where $1_{t}(L) = {1}(\mbox{packet is lost in round t})$, with 1 an indicator function and {$U_t$ $\sim$ Unif($\mathcal{X})$}.

\paragraph{Evolution of population state distribution $\mu_t$} 
Define, for $x \in \mathcal{X}$,
\begin{eqnarray*}
\tilde{\mu}_t(x) = &\sum_{x^{\prime} \geq x}\mu_t(x^{\prime}) h_t(x^{\prime})(x^{\prime}-x)\left(1- 1(F_t>c)\frac{F_t-c}{F_t}\right) + \mu_t(x)1(F_t>c)\frac{F_t-c}{F_t}.
\end{eqnarray*}
Then $\tilde{\mu}_t$ represents the state of the population distribution after the first step of task fulfillment  and before the second step {{of}} task fulfillment.
Finally, for $x \in \mathcal{X}$,
${\mu}_{t+1}(x) = \left(\tilde{\mu}_t(x) +\frac{\tilde{\mu}_t(0)}{|\Xc|}\right) {1}(x \neq 0) + \frac{\tilde{\mu}_t(0)}{|\Xc|} {1}(x = 0),$
describes the transition of the flows $\mu_{t+1}=\Phi(\mu_t, h_t)$. 

\paragraph{Rewards} Consistent with \cite{DMZAGGS2018} and \cite{JRGST2019}, the reward function depending on throughput, latency, with loss penalty is defined as
$\tilde r = a*\mbox{throughput}-b*\mbox{latency}^2 -d*\mbox{loss},$
with $a,b,d \ge 0$.

\subsection{Performance of MFC-K-Q Algorithm}
We first test the convergence property and performance of MFC-K-Q (Algorithm \ref{QL_CD}) for this traffic control problem with different kernel choices and with varying $N$. We then compare MFC-K-Q with MFQ Algorithm \cite{CLT2019b} on MFC, Deep PPQ  \cite{JRGST2019}, and PCC-VIVACE  \cite{DMZAGGS2018} on MARL.

We assume the access to {an MFC} simulator $\mathcal{G}(\mu,h)=(\mu^{\prime},{r})$. That is, for any pair $(\mu,h)$ $\in$ $\Cc$, we can sample  the aggregated population reward $ r$ and the next population state distribution $\mu^{\prime}$  under policy $h$.   We sample $\mathcal{G}(\mu,h)=(\mu^{\prime},{r})$  once for all $(\mu,h)\in \mathcal{C}_{\epsilon}$. 
In each outer iteration, each update on $(\mu,h)\in \mathcal{C}_{\epsilon}$ is one inner-iteration. 
Therefore, the total number of inner iterations within each outer iteration equals $|\mathcal{C}_{\epsilon}|$. 

\paragraph{Applying MFC policy to $N$-agent game} To measure the performance of the MFC policy  $\pi$ for an $N$-agent set-up, we apply $\pi$ to the empirical state distribution of $N$ agents. 
 
\paragraph{Performance criteria} We assume the access to an N-agent simulator $\mathcal{G}^N(\pmb{x},\pmb{u})=(\pmb{x}^{\prime},\pmb{r})$.
That is, if agents take joint action $\pmb{u}$ from state $\pmb{x}$, we can observe the joint reward $\pmb{r}$ and the next joint state $\pmb{x}^{\prime}$. We evaluate different policies in the $N$-agent environment. 

We randomly sample $K$ initial states {{$\{\pmb{x}_0^k\in \mathcal{X}^N\}_{k=1}^{K}$}} and apply policy ${\pi}$ to each initial state $\pmb{x}_0^k$ and collect the continuum rewards in each path for $T_0$ rounds $\{\bar{r}_{k,t}^{\pi}\}_{t=1}^{T_0}$. {Here $\bar{r}_{k,t}^{\pi} = \frac{\sum_{i=1}^N r^{\pi,i}_k}{N}$ is the average reward from $N$ agents in round $t$  under policy $\pi$.}
Then $R_N^{\pi}(\pmb{x}_0^k):=\sum_{t=1}^{T_0}\gamma^{t}\bar{r}_{k,t}^{\pi}$ is used to approximate the value function $V_{\mathcal{C}}^{\pi}$ with policy $\pi$, when $T_0$ is large. 

Two performance criteria are used: the   first one $C^{(1)}_N({\pi}) = \frac{1}{K}\sum_{k=1}^K R_N^{\pi}(\pmb{x}^k_0)$
measures the average reward from policy ${\pi}$; and the second criterion
$C^{(2)}_N({\pi}^1,{\pi}^2) = \frac{1}{K}\sum_{k=1}^K\frac{ R_N^{\pi_1}(\pmb{x}^k_0)-R_N^{\pi_2}(\pmb{x}^k_0)}{R_N^{\pi_1}(\pmb{x}^k_0)}$
measures the relative improvements of using policy ${\pi}^1$ instead of policy ${\pi}^2$.

\paragraph{Experiment set-up} We set $\gamma=0.5$, $a=30$, $b=10$, $d=50$, $c=0.4$, $M=2$, $K=500$ and $T_0=30$, and compare policies with $N=5n$ agents $(n=1,2,\cdots,20)$. For the $\epsilon$-net, we take uniform grids with $\epsilon$ distance between adjacent points on the net. {The confidence intervals are calculated with $20$ repeated experiments.}

\begin{figure}[tbhp]
\centering
\subfloat[Convergence of Q function.]{\label{fig:kernel_convergence}\includegraphics[width=0.45\linewidth]{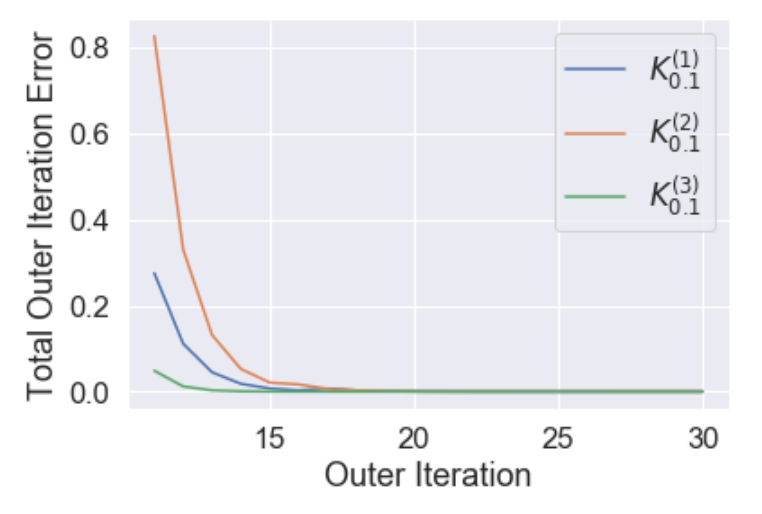}}
\subfloat[$C_N^{(1)}$: Average reward.]{\label{fig:kernel_reward}\includegraphics[width=0.45\linewidth]{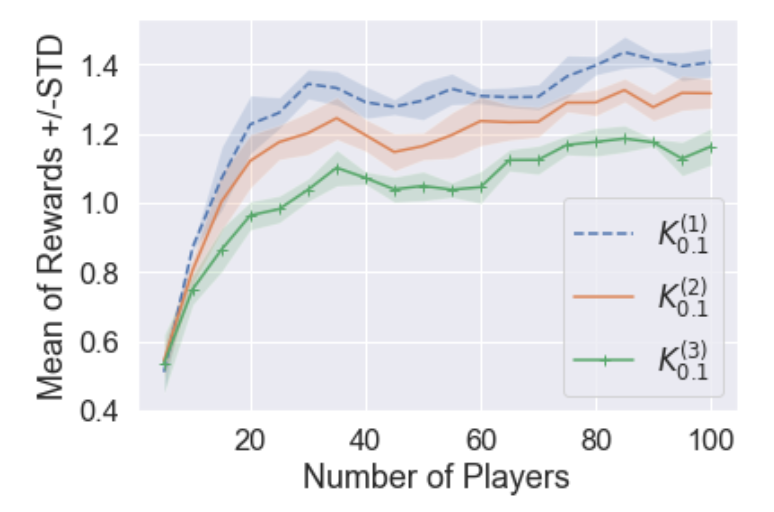}}
\caption{Performance comparison among different kernels.}
\label{fig:kernel}
\end{figure}

\paragraph{Results with different kernels}
We use the following kernels with hyper-parameter $\epsilon$: triangular, (truncated) Gaussian, and (truncated) constant kernels. That is, 
 $\phi^{(1)}_\epsilon(x,y) = \textbf{1}_{\{\|x-y\|_2\leq \epsilon\}}\big|\epsilon-\|x-y\|_2\big|$,
     $\phi^{(2)}_\epsilon(x,y)=\textbf{1}_{\{\|x-y\|_2\leq \epsilon\}}\frac{1}{\sqrt{2\pi}}\exp(-|\epsilon-\|x-y\|_2|^2)$,
    and  $\phi^{(3)}_\epsilon(x,y) = \textbf{1}_{\{\|x-y\|_2\leq \epsilon\}}$.
    We run the experiments for 
$ K_\epsilon^{(j)}(c^i, c) = \frac{\phi_\epsilon^{(j)}(c^i, c)}{\sum_{i=1}^{N_\epsilon} \phi_\epsilon^{(j)}(c^i, c)}. $
with $j=1, 2, 3$ and $\epsilon=0.1$.

All kernels lead to the convergence of Q functions within $15$ outer iterations (Figure \ref{fig:kernel_convergence}). When $N \leq 10$, the performances of all kernels are similar since $\epsilon$-net is accurate for games with $N=\frac{1}{\epsilon}$ agents. When $N \geq 15$, $K_{0.1}^{(1)}$ performs the best  and  $K_{0.1}^{(3)}$ does the worst (Figure \ref{fig:kernel_reward}): treating all nearby $\epsilon$-net points with equal weights yields relatively poor performance. 
 
Further comparison of $K_{0.1}^{(j)}$'s suggests that appropriate choices of kernels for specific problems with particular structures of Q functions help reducing errors  from a fixed $\epsilon$-net. 
 
\begin{figure}[tbhp]
\centering
\subfloat[Convergence of Q function]{\includegraphics[width=0.45\linewidth]{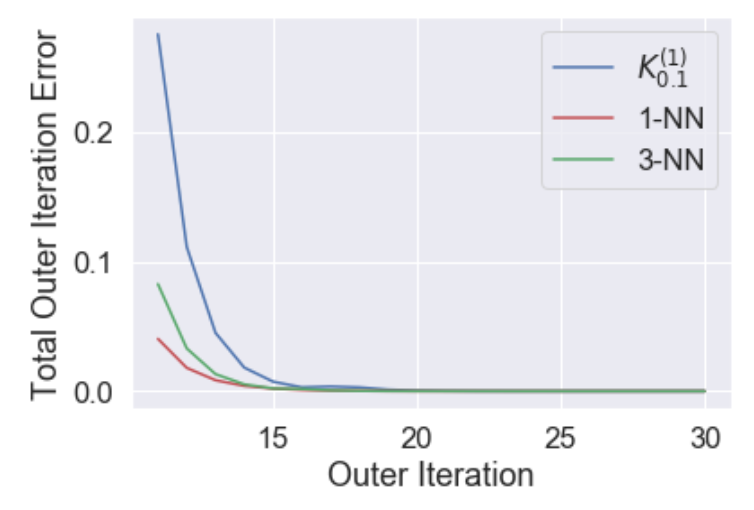}}
\subfloat[$C_N^{(1)}$: Average reward.]{\label{fig:epsion_performance_2}\includegraphics[width=0.45\linewidth]{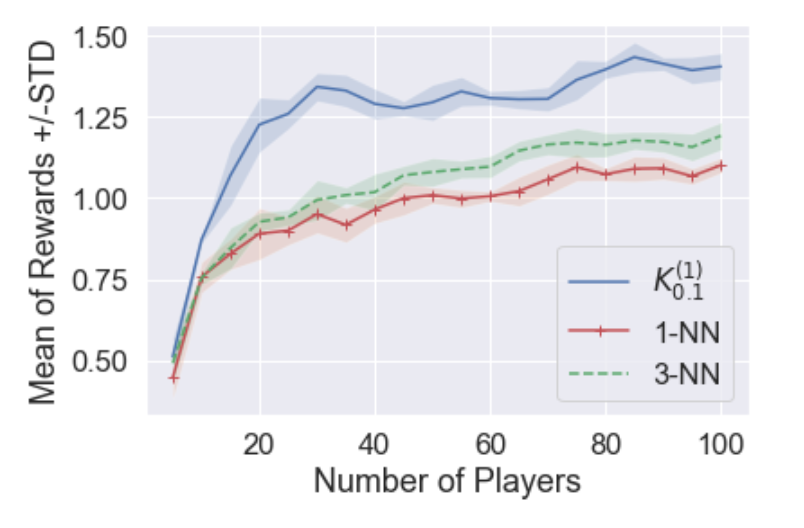}}
\hfill
\centering
\subfloat[$C_N^{(2)}$: Improvement of $K_{0.1}^{(1)}$ from $1$-NN.]{\label{fig:epsion_performance_3}\includegraphics[width=0.45\linewidth]{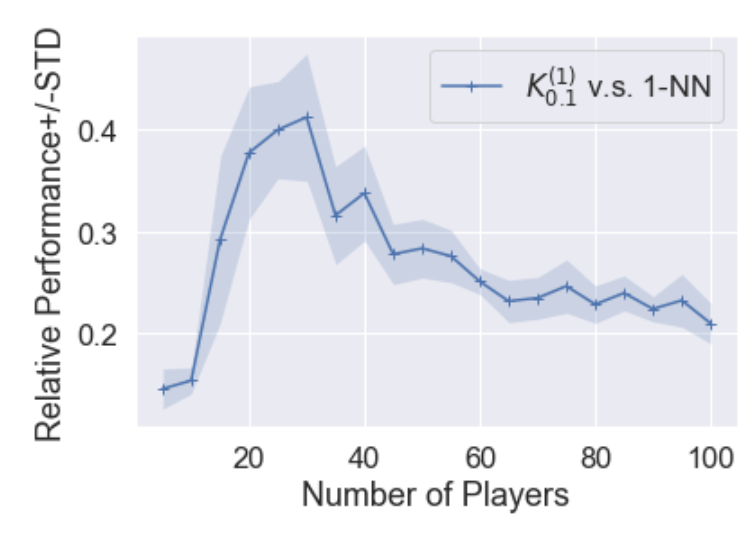}}
\subfloat[$C_N^{(2)}$: Improvement of $K_{0.1}^{(1)}$ from $3$-NN.]{\label{fig:truncation_error_2}\includegraphics[width=0.45\linewidth]{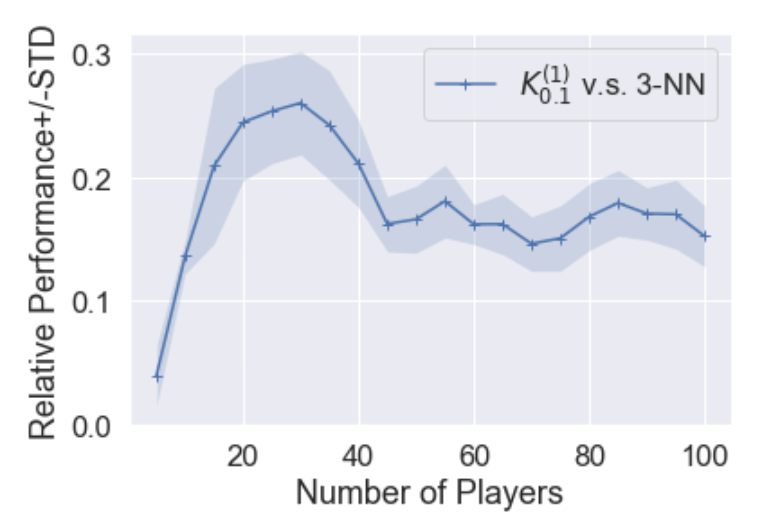}}
\caption{Comparison between $K_{0.1}^{1}(x,y)$ and $k$-NN ($k=1,3$).}
\end{figure}
\paragraph{Results with different $k$-nearest neighbors}
We compare kernel $K_{0.1}^{(1)}(x,y)$ with the $k$-nearest-neighbor ($k$-NN) method ($k=1,3$), with $1$-NN the projection approach by which each point is projected onto the closest point in $\mathcal{C}_{\epsilon}$, a simple method for  continuous state and action spaces \cite{MM2002,V2012}.

All $K_{0.1}^{(1)}(x,y)$ and $k$-NN  converge within 15 outer iterations. The performances of $K_{0.1}^{1}(x,y)$ and $k$-NN are similar when $N\leq 10$. However,  $K_{0.1}^{(1)}(x,y)$ outperforms both $1$-NN and $3$-NN for large $N$ under both criteria $C_N^{(1)}$ and $C_N^{(2)}$:  under $C_N^{(1)}$, $K_{0.1}^{(1)}(x,y)$, $1$-NN, and $3$-NN have respectively average rewards of $1.4$,  $1.07$, and $1.2$ when $N\ge 65$; 
under $C_N^{(2)}$, $K_{0.1}^{(1)}(x,y)$ outperforms $1$-NN and $3$-NN by 15$\%$ and 13$\%$ respectively when $N=10$, by  29$\%$ and 21$\%$ respectively when $N=15$, and by  25$\%$ and 16$\%$ respectively when $N \ge 60$.


\paragraph{Comparison with other algorithms}
We compare MFC-K-Q with $K_{0.1}^{(1)}$ with three representative algorithms, MFQ  from \cite{CLT2019b}, Deep PPQ  from \cite{JRGST2019}, and PCC-VIVACE  from \cite{DMZAGGS2018} on MARL.
Our experiment demonstrates superior performances of MFC-K-Q.
\begin{itemize}
    \item When N>40, MFC-K-Q dominates all these three algorithms (Figure~\ref{fig:reward_comparison}) and it learns the bandwidth parameter $c$ most accurately (Figure~\ref{fig:bandwidth_comparison}). Despite being the best performer when N<35, Deep PPQ  suffers from the ``curse of dimensionality'' and the performance gets increasingly worse when N increases;
    \item MFC-K-Q with $K_{0.1}^{(1)}$ dominates MFQ, which is similar to our worst performer MFC-K-Q with 1-NN. In general, kernel regression performs better than simple projection (adopted in MFQ) where only one point is used to estimate $Q$;
    \item the decentralized PCC-VIVACE has the worst performance. Moreover, it is insensitive to the bandwidth parameter $c$. See Figure~\ref{fig:bandwidth_comparison}.
\end{itemize}
\begin{figure}[tbhp]
\centering
\subfloat[$C_N^{(1)}$: Average reward.]{\label{fig:reward_comparison}\includegraphics[width=0.45\linewidth]{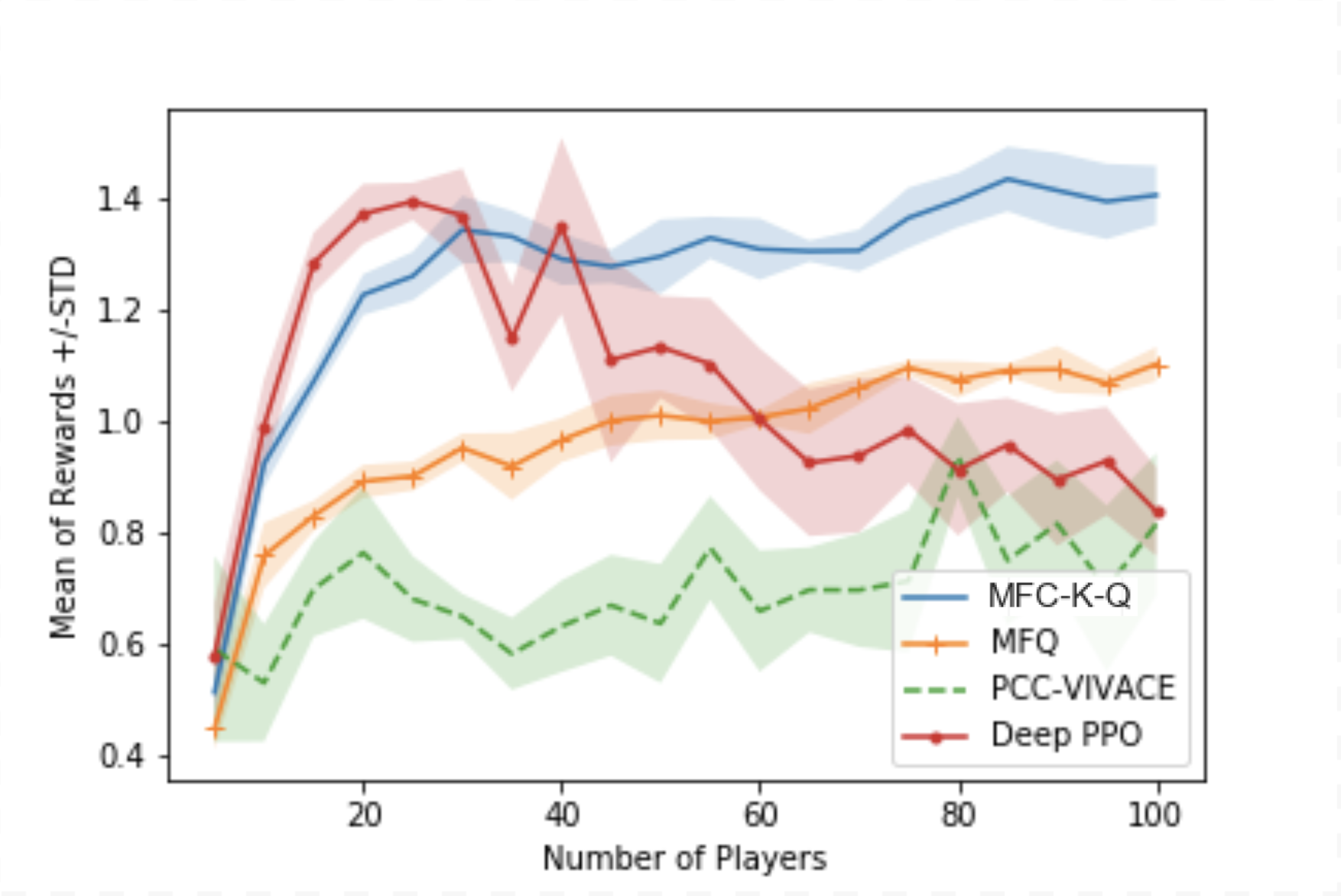}}
\subfloat[Average sending flow.]{\label{fig:bandwidth_comparison}\includegraphics[width=0.45\linewidth]{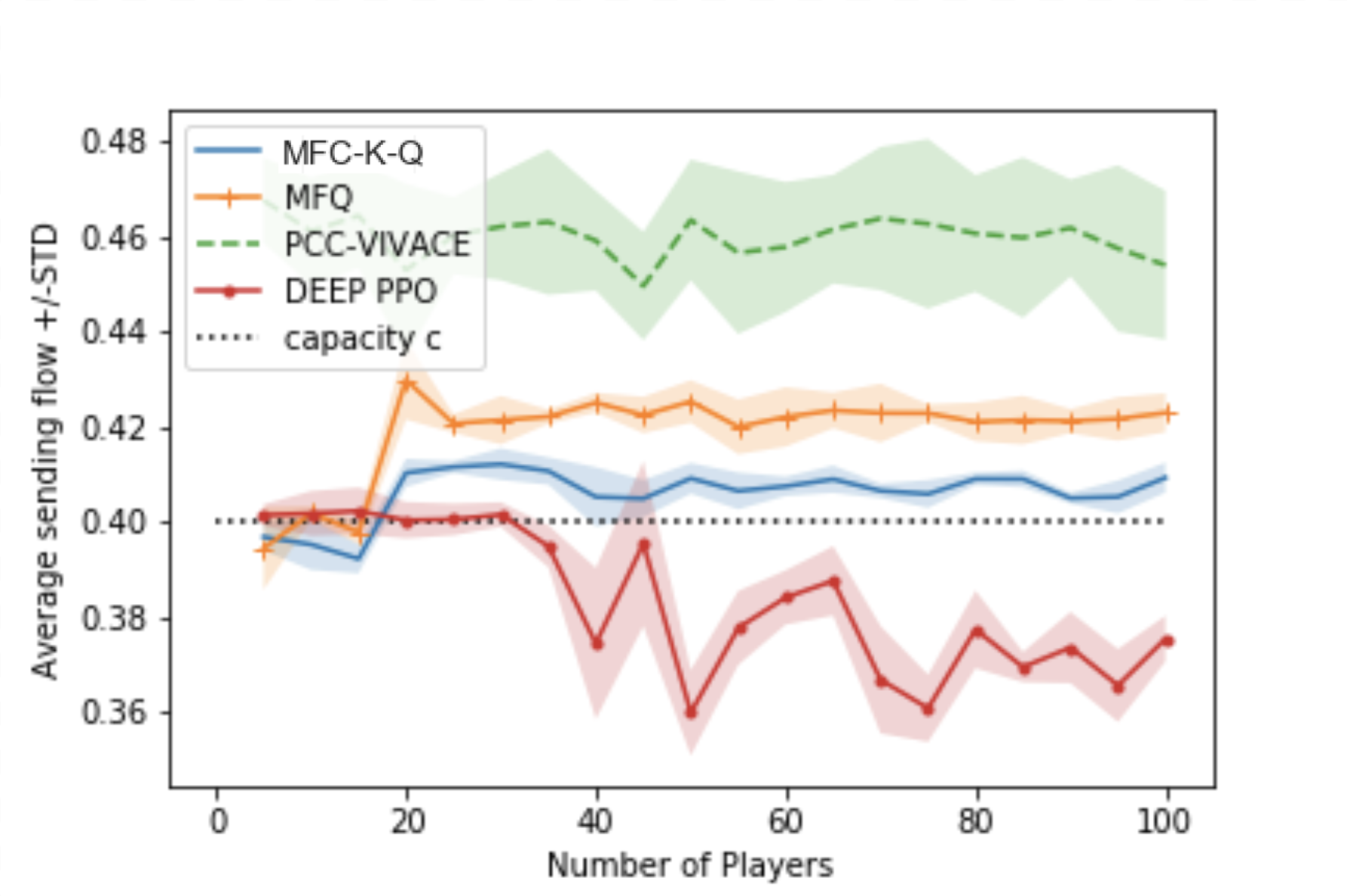}}
\caption{\footnotesize Performance comparison among different algorithms.}
\label{fig:algos}
\end{figure}

\section{Discussions and Future Works}\label{sec:discussion}
\paragraph{Related works on kernel-based reinforcement learning}

Kernel method is a popular dimension reduction technique to map high-dimensional features into a  low dimension space that best represents the original features. This technique was first introduced for RL by \cite{OS2002, OG2002},  in which a kernel-based reinforcement learning algorithm (KBRL) was proposed to handle the continuity of the state space. 
Subsequent works demonstrated the applicability of KBRL to large-scale problems and for various types of RL algorithms (\cite{BPP2016}, \cite{taylor2009kernelized} and \cite{xu2007kernel}). 
However, there is no prior work on  convergence rate or sample complexity analysis. 

Our kernel regression idea is closely related to \cite{SX2018},  which combined Q-learning with kernel-based nearest neighbor regression to study continuous-state stochastic MDPs with sample complexity guarantee. 
However, our problem setting and technique for error bound analysis are different from theirs. 
In particular, Theorem \ref{thm:conv_mfc} has both action space approximation and state space approximation; whereas \cite{SX2018} has only state space approximation and their action space is finite. The error control in \cite{SX2018} was obtained via martingale concentration inequalities whereas ours is by the regularity property of the underlying dynamics. Other than the kernel regression method, one could also consider the empirical (or approximate) dynamic programming approach to handle the infinite dimensional problem \cite{chen2009approximate,haskell2016empirical}.
\paragraph{Stochastic vs deterministic dynamics}
We reiterate that unlike learning algorithms for  stochastic dynamics where the choice of  learning rate $\eta_t$ is to guarantee the convergence of the Q function (see e.g. \cite{watkins1992q}),  MFC-K-Q  directly conducts the fixed point iteration for the approximated Bellman operator $B_\epsilon$ on the sampled data set, and sets the learning rate as  $1$ to fully utilize the deterministic nature of the dynamics. Consequently,  complexity analysis of this algorithm is reduced significantly. By comparison, for stochastic systems each component in the $\epsilon$-net has to be visited  sufficiently many times for a decent estimate in Q-learning.

\paragraph{Sample complexity comparison} Theorem~\ref{thm:sample_complexity} shows that  sample complexity for MFC with learning is   $\Omega(\text{poly}((1/\epsilon)\cdot\log(1/\delta)))$, instead of the exponential rate in $N$ by existing algorithms for cooperative MARL in Proposition~\ref{lemma:N_agent_complexity}.
Careful readings  reveal that this complexity analysis holds for other exploration schemes,
including the Gaussian exploration and the Boltzmann exploration, as long as Lemma \ref{lemma:covering} holds.

\paragraph{Convergence under different norms} Our main assumptions and results adopt the infinity norm ($\|\cdot\|_{\infty}$)  for ease of exposition. Under appropriate assumptions on the mixing behavior of the mean-field dynamic, and applying techniques in \cite{munos2008}, the convergence results can also be established under the $L_p$ ($\|\cdot\|_p$) norm to allow for the function approximation of Q-learning. In addition, by properly controlling the Lipschtiz constant, the empirical performance of the neural network approximation may be further improved (\cite{asadi2018lipschitz}).

\paragraph{Extensions to other settings} For future research, we are interested in extending our framework and learning algorithm to other variations of mean-field controls including risk-sensitive mean-field controls (\cite{bensoussan2017risk}, \cite{DT2016}, and \cite{DTT2015}),
robust mean-field controls (\cite{wang2020robust}), mean-field controls on polish space (\cite{saldi2020discrete}), and partially observed mean-field controls (\cite{DT2016,saldi2019approximate}).

 If the state space of each individual player is a Polish space \cite{saldi2020discrete}, one can adopt, instead of the Q learning framework in this paper,  Proximal Policy Optimization (PPO)  type of algorithms \cite{schulman2017proximal,schulman2015trust}. In this framework, the mean-field information on the lifted probability measure may be incorporated via a mean embedding technique, which embeds the mean-field states into a reproducing kernel Hilbert space (RKHS) \cite{smola2007hilbert,gretton2008kernel}.
 
 Given the connection between the Q function and the Hamiltonian of  nonlinear control problem with single-agent \cite{mehta2009q}, one may also extend the kernel-based Q learning algorithm to more general nonlinear mean-field control problems. \\

{ \noindent {\bf Acknowledgement.} We are grateful to two anonymous referees from SIAM Journal on Mathematics of Data Science for their detailed suggestions, which help improve the exposition of the paper and in particular  Section \ref{sec:connection}. We thank the authors of  \cite{Mondal2021} who spotted an error in the earlier proof of Theorem \ref{NagentMFC} in our original arxiv version; our correction of the error yields a refined upper bound of \eqref{eq:bound} {and leads a sublinear dependence  on the dimensions of the state and action spaces with order $(\sqrt{|\mathcal{X}|}+\sqrt{\mathcal|\Uc|})$ in the constant term $C$ of Theorem \ref{NagentMFC} }.
}

\bibliographystyle{siamplain}
\bibliography{refs}
\newpage
\appendix
\section{Table of Parameters}\label{app:table}
\begin{table}[H]
  \begin{center}
    \begin{tabular}{|r|l|} 
      \hline
      \textbf{Notation} & \textbf{Definition} \\
      \hline
       $\mathbb{R}^{\mathcal{S}}$ & set of all real-valued measurable functions on measurable space $\mathcal{S}$\\
       $\mathcal{P}(\mathcal{S})$ & set of all probability measures on $\mathcal{S}$\\
       $d_{\mathcal{P}(\mathcal{S})}$ & metric induced by $l_1$ norm: $d_{\mathcal{P}(\mathcal{S})}(u,v) = \sum_{s\in \mathcal{S}}|u(s)-v(s)|$ for any $u,v \in \mathcal{P}(\mathcal{S})$\\
        $\gamma$ & discount factor \\
       $1(x\in A)$ & indicator function of event $\{x\in A\}$\\
        $N$ & number of agents \\
        $\mathcal{X}$ & state space of single agent\\
          $\mathcal{S}$ & action space of single agent\\
    $\mu_t^N\in\mathcal{P}(\mathcal{X})$ & empirical state distribution of $N$ agents at time $t$\\
    $\nu_t^N\in\mathcal{P}(\mathcal{U})$ & empirical action distribution of $N$ agents at time $t$\\
     $\mu_t\in\mathcal{P}(\mathcal{X})$ &  state distribution of the MFC problem at time $t$\\
    $\nu_t\in \mathcal{P}(\mathcal{U})$ & action distribution of of the MFC problem at time $t$\\
    $\mathcal{H}$ & $\mathcal{H}:=\{h:\mathcal{X}\rightarrow
    \mathcal{P}(\mathcal{U})\}$ is the set of local policies\\
    $\mathcal{C}$ & $\mathcal{C}:=\Pc(\Xc)\times\Hc$, the product space of $\Pc(\Xc)$ and $\Hc$\\
    $\Pi$ & $\Pi:=\{\pi=\{\pi_t\}_{t=0}^\infty \,|\, \pi_t:\Pc(\Xc)\to\Hc\}$ is the set of admissible policies\\
    $\tilde{r}(x, \mu, u, {\nu}(\mu, h))$& individual reward\\
     $R$ &  bound of the reward, i.e., $|\tilde{r}|<R$\\
  $r(\mu, h)$& aggregated population reward $r(\mu, h):=\sum_{x \in \mathcal{X}} \sum_{u \in \mathcal{U}} \tilde r(x, \mu, u, {\nu}(\mu, h)) \mu(x) h(x)(u)$\\
  $L_P$ & Lipschitz constant for transition matrix $P$\\
  $L_{\tilde{r}}$ & Lipschitz constant for reward $\tilde{r}$\\
  $L_r:=\tilde{R}+2L_{\tilde{r}}$ &Lipschitz constant for $r$ \\  $L_\Phi:=2L_P+1$ &Lipschitz constant for $\Phi$\\
  $\mathcal{C}_{\epsilon} $ & $\epsilon$-net on $\mathcal{C}$\\
  $N_\epsilon$ & size of the $\epsilon$-net $\mathcal{C}_{\epsilon}$ on $\mathcal{C}$\\
  $N_{\Hc_\epsilon}$ & size of the $\epsilon$-net $\mathcal{H}_{\epsilon}$ on $\mathcal{H}$\\
  $K(c^i,c)$& weighted kernel function with $c^i \in \mathcal{C}_{\epsilon} $ and $c\in \mathcal{C}$\\
  $L_K$&Lipschitz constant for kernel $K$\\
  $N_K$& at most $N_K$ number of $c^i\in\Cc_\epsilon$ satisfies $K(c,c^i)>0$\\
  $\Gamma_K$& kernel regression operator from $\R^{\Cc_\epsilon}\to\R^{\Cc}$\\
  $T_{\Cc,\pi}$& covering time of the $\epsilon$-net under policy $\pi$\\
  $M_\epsilon$& constant appearing in Assumption \ref{ass:dynamic}, the controllability of the dynamics\\

         \hline
    \end{tabular}
   
     \caption{Summary of parameters for LMFC and MARL.} \label{tab:parameters}
  \end{center}
\end{table}

\section{Proofs of Lemmas}\label{app:proof}

\begin{proof}[Proof of Lemma \ref{lemma:flowmut}]
At time step $t$, assume $x_t\sim\mu_t$. Under the policy $\pi_t$, it is easy to check via direct computation that the corresponding action distribution $\nu_t$ is $\nu(\mu_t, \pi_t(\cdot, \mu_t))$. Meanwhile, for any bounded function $\varphi$ on $\Xc$, by the law of iterated conditional expectation:
\begin{eqnarray*}
\E^{\pi}[\varphi(x_{t + 1})] &=& \E^{\pi}\Big[\E^{\pi}\big[\varphi(x_{t + 1}) | x_0 \ldots, x_t\big]\Big] = \E^{\pi}\Big[\sum_{x' \in \Xc} \varphi(x') P(x_t, \mu_t, u_t, \nu_t)(x')\Big]\\
&=& \sum_{x' \in \Xc} \varphi(x') \E^{\pi}\Big[ P(x_t, \mu_t, u_t, \nu_t)(x')\Big]\\
&=& \sum_{x' \in \Xc} \varphi(x') \sum_{x \in \Xc} \mu_t(x)\sum_{u \in \Uc}{\pi}_t(x, \mu_t)(u)P(x, \mu_t, u, \nu_t)(x'),
\end{eqnarray*}
which concludes that $x_{t+1}\sim\Phi(\mu_t, \pi_t(\cdot,\mu_t))$. {Here $\E^\pi$ denotes the expectation under policy $\pi$}. Therefore, under ${\pi}=\{\pi_t\}_{t=0}^{\infty}$, $\mu_{t+1}=\Phi(\mu_t, \pi_t(\cdot,\mu_t))$ defines a deterministic flow $\{\mu_t\}_{t=0}^{\infty}$ in $\Pc(\Xc)$, and $x_t\sim\mu_t$. Moreover, by Fubini's theorem
\begin{eqnarray*}
v^\pi(\mu) &=& \mathbb{E}^\pi\biggl[\sum_{t = 0}^\infty \gamma^t \tilde r(x_t, \mu_t, u_t, \nu_t) \bigg| x_0 \sim \mu \biggl] = \sum_{t = 0}^\infty \gamma^t\mathbb{E}^\pi\biggl[\tilde r(x_t, \mu_t, u_t, \nu_t) \bigg| x_0 \sim \mu\biggl]\\
&=& \sum_{t = 0}^\infty \gamma^t\mathbb{E}\biggl[\tilde r(x_t, \mu_t, u_t, \nu_t) \bigg| x_t \sim \mu_t, u_t\sim\pi_t(x_t, \mu_t)\biggl]\\
&=& \sum_{t = 0}^\infty \gamma^t \sum_{x \in \mathcal{X}} \sum_{u \in \mathcal{U}} \tilde r(x, \mu_t, u, {\nu}(\mu_t, \pi_t(\cdot,\mu_t))) \mu_t(x) \pi_t(x,\mu_t)(u)\\
&=& \sum_{t=0}^{\infty} \gamma^{t} r(\mu_t, \pi_t(\cdot, \mu_t)).
\end{eqnarray*}
This proves \eqref{equ:reformv}.
\end{proof}

\begin{proof}[Proof of Lemma \ref{lemma:cont_nu} ]
    \begin{eqnarray*}
        \|\nu(\mu,h)-\nu(\mu',h')\|_1&\leq&\|\nu(\mu,h)-\nu(\mu,h')\|_1+\|\nu(\mu,h')-\nu(\mu',h')\|_1\\
        &\leq&\Big|\Big|\sum_{x\in\Xc}(h(x)-h'(x))\mu(x)\Big|\Big|_1+\Big|\Big|\sum_{x\in\Xc}(\mu(x)-\mu'(x))h'(x)\Big|\Big|_1\\
        &\leq&\sum_{x\in\Xc}\mu(x)\Big|\Big|h(x)-h'(x)\Big|\Big|_1+\Big|\Big|\sum_{x\in\Xc}(\mu(x)-\mu'(x))h'(x)\Big|\Big|_1\\
        &\leq&\max_{x\in\Xc}\Big|\Big|h(x)-h'(x)\Big|\Big|_1 +\sum_{u\in\Uc}\sum_{x\in\Xc}|\mu(x)-\mu'(x)|h'(x)(u)\\
        &=&\,d_{\Hc}(h,h')+\|\mu-\mu'\|_1=d_\Cc((\mu,h),(\mu',h')).
    \end{eqnarray*}
\end{proof}

\begin{proof}[Proof of Lemma \ref{lemma:cont_r}]
    \begin{eqnarray*}
        & &|r(\mu,h)-r(\mu',h')|\\
        &=&\Big|\sum_{x\in\mathcal{X}}\sum_{u\in\mathcal{U}}\tilde r(x,\mu,u, {\nu}(\mu, h))\mu(x)h(x)(u)-\sum_{x\in\mathcal{X}}\sum_{u\in\mathcal{U}}\tilde r(x,\mu',u,{\nu}(\mu',h'))\mu'(x)h'(x)(u)\Big|\\
        & &(\text{For simplicity, denote }\tilde{r}_{x,u}=\tilde{r}(x,\mu,u, {\nu}(\mu, h)),\tilde{r}'_{x,u}=\tilde{r}(x,\mu',u, {\nu}(\mu', h')).)\\
        &\leq&\Big|\sum_{x\in\Xc}\sum_{u\in\Uc}(\tilde{r}_{x,u}-\tilde{r}'_{x,u})\mu(x)h(x)(u)\Big|+\Big|\sum_{x\in\Xc}\sum_{u\in\Uc}\tilde{r}'_{x,u}(\mu(x)h(x)(u)-\mu'(x)h'(x)(u))\Big|.
    \end{eqnarray*}
    By Assumption~\ref{ass:r_MFC} and Lemma~\ref{lemma:cont_nu}, for any $x \in \Xc, u \in \Uc$,
    \begin{eqnarray*}
        |\tilde{r}_{x,u}-\tilde{r}'_{x,u}|&\leq&L_{\tilde{r}}(\|\mu-\mu'\|_1+ \|\nu(\mu,h),\nu(\mu',h')\|_1)\\
        &\leq&L_{\tilde{r}} \cdot (\|\mu-\mu'\|_1 +d_\Cc((\mu,h),(\mu',h')))\leq2L_{\tilde{r}}d_\Cc((\mu,h),(\mu',h')).
    \end{eqnarray*}
    Meanwhile, 
    \begin{eqnarray*}
        &&\sum_{x\in\Xc}\sum_{u\in\Uc}|\mu(x)h(x)(u)-\mu'(x)h'(x)(u)|\\
        &\leq&\sum_{x\in\Xc}\sum_{u\in\Uc}|\mu(x)-\mu'(x)|h(x)(u)+\sum_{x\in\Xc}\sum_{u\in\Uc}\mu'(x)|h(x)(u)-h'(x)(u)|\\
        &=&\sum_{x\in\Xc}|\mu(x)-\mu'(x)|+\sum_{x\in\Xc}\mu'(x)\|h(x)-h'(x)\|_1\\
        &\leq&\|\mu-\mu'\|_1+ \max_{x\in\Xc}\|h_1(x)-h_2(x)\|_1=d_\Cc((\mu,h),(\mu',h')).
    \end{eqnarray*}
    Combining all these results, we have
    \begin{eqnarray*}
        |r(\mu,h)-r(\mu',h')|&\leq&\sum_{x\in\Xc}\sum_{u\in\Uc}|\tilde{r}_{x,u}-\tilde{r}'_{x,u}|\mu(x)h(x)(u)+\tilde{R}\sum_{x\in\Xc}\sum_{u\in\Uc}|\mu(x)h(x)(u)-\mu'(x)h'(x)(u)|\\
        &\leq&(\tilde{R}+2L_{\tilde{r}})d_\Cc((\mu,h),(\mu',h')).
    \end{eqnarray*}
\end{proof}

\begin{proof}[Proof of Lemma \ref{lemma:cont_phi}]
    \begin{eqnarray*}
        &&\|\Phi(\mu,h)-\Phi(\mu',h')\|_1\\
        &=&\Big|\Big|\sum_{x\in\Xc}\sum_{u\in\Uc}P(x,\mu,u,{\nu}(\mu,h))\mu(x)h(x)(u)-\sum_{x\in\Xc}\sum_{u\in\Uc}P(x,\mu',u,{\nu}(\mu',h'))\mu'(x)h'(x)(u)\Big|\Big|_1\\
        &&(\text{For simplicity, denote }P_{x,u}=P(x,\mu,u, {\nu}(\mu, h)),P'_{x,u}=P(x,\mu',u, {\nu}(\mu', h')).)\\
        &\leq&\Big|\Big|\sum_{x\in\Xc}\sum_{u\in\Uc}(P_{x,u}-P'_{x,u})\mu(x)h(x)(u)\Big|\Big|_1+\Big|\Big|\sum_{x\in\Xc}\sum_{u\in\Uc}P'_{x,u}(\mu(x)h(x)(u)-\mu'(x)h'(x)(u))\Big|\Big|_1.
    \end{eqnarray*}
    By Assumption~\ref{ass:P_MFC} and Lemma~\ref{lemma:cont_nu}, for any  $x$ and $u$,
    \begin{eqnarray*}
        ||P_{x,u}-P'_{x,u}||_1&\leq&L_{P}\cdot (\|\mu-\mu'\|_1 + \|\nu(\mu,h)-\nu(\mu',h')\|_1)\\
        &\leq&L_{P} \cdot (\|\mu-\mu'\|_1 +d_\Cc((\mu,h),(\mu',h')))\leq 2 L_{P} \cdot d_\Cc((\mu,h),(\mu',h')).
    \end{eqnarray*}
Meanwhile, from the proof of Lemma ~\ref{lemma:cont_r}, we know
    \begin{eqnarray*}
        \sum_{x\in\Xc}\sum_{u\in\Uc}|\mu(x)h(x)(u)-\mu'(x)h'(x)(u)|\leq d_\Cc((\mu,h),(\mu',h')).
    \end{eqnarray*}
    Combining all these results, we have
    \begin{eqnarray*}
        \|\Phi(\mu,h)-\Phi(\mu',h')\|_1
        &\leq&\sum_{x\in\Xc}\sum_{u\in\Uc}||P_{x,u}-P'_{x,u}||_1\mu(x)h(x,u)+\sum_{x\in\Xc}\sum_{u\in\Uc}||P'_{x,u}||_1|\mu(x)h(x)(u)-\mu'(x)h'(x)(u))|\\
        &\leq&(2L_{P}+1)d_\Cc((\mu,h),(\mu',h')).
    \end{eqnarray*}
\end{proof}

\begin{proof}[Proof of Lemma \ref{continuity-Qc}] To prove the continuity of $Q$, first fix $c$ and $c'$ $\in$ $\mathcal{C}$. Then there exists some policy $\pi$ such that $Q(c)-Q^{\pi}(c)<\frac{\epsilon}{2}$. 
    Let $c=(\mu_0,h_0),(\mu_1,h_1),(\mu_2,h_2),\dots,(\mu_t,h_t),\dots$ be the trajectory of the system starting from $c$ and then taking the policy $\pi$. Then $Q^{\pi}(c)=\sum_{t=0}^\infty\gamma^tr(\mu_t,h_t)$.
    
    Now consider the trajectory of the system starting from $c'$ and then taking $h_1,\dots,h_t,\dots$, denoted by $c'=(\mu'_{0},h'_{0}),(\mu'_{1},h_1),(\mu'_{2},h_2),\dots,(\mu'_{t},h_t),\dots$. Note that this trajectory starting from $c'$ may not be the optimal trajectory, therefore, 
    $Q(c')\geq\sum_{t=0}^\infty\gamma^t r(\mu'_{t},h_{t})$.
    By Lemma \ref{lemma:cont_r} and Lemma \ref{lemma:cont_phi}, 
    \begin{eqnarray*}
        |r(\mu'_{t},h_t)-r(\mu_t,h_t)|
        &\leq&L_r\cdot d_{\Pc(\Xc)}(\mu'_{t}, \mu_t)=L_r\cdot d_{\Pc(\Xc)}(\Phi(\mu'_{t-1}, h_{t-1}), \Phi(\mu_{t-1}, h_{t-1}))\notag\\
        &\leq& L_r\cdot L_{\Phi}\cdot d_{\Pc(\Xc)}(\mu'_{t-1}, \mu_{t-1})\leq \cdots\leq L_r\cdot L_{\Phi}^{t}\cdot d_\Cc(c, c'),
    \end{eqnarray*}
    implying that 
    \begin{eqnarray*}
      Q(c)-Q(c')
        &\leq&\,\frac{\epsilon}{2}+Q^{\pi}(c)-Q(c')\leq\,\frac{\epsilon}{2}+(r(c)-r(c'))+\sum_{t=1}^\infty\gamma^t(r(\mu_t,h_t) - r(\mu'_{t},h_{t}))\nonumber\\
        &\leq&\frac{\epsilon}{2}+\sum_{t=0}^\infty\gamma^t\cdot L_{\Phi}^t\cdot L_r\cdot d_\Cc(c, c') = \frac{\epsilon}{2}+\frac{L_r}{1-\gamma \cdot L_{\Phi}}\cdot d_\Cc(c, c').
    \end{eqnarray*}
    Similarly, one can show $Q(c')-Q(c)\leq\frac{\epsilon}{2}+\frac{L_r}{1-\gamma \cdot L_{\Phi}}\cdot d_\Cc(c, c')$. Therefore, as long as $d_\Cc(c, c')\leq\frac{\epsilon\cdot(1-\gamma \cdot L_{\Phi})}{2L_r}$, $|Q(c')-Q(c)|\leq\epsilon$. This proves that $Q$ is continuous.
\end{proof}

\begin{proof}[Proof of Lemma \ref{lemma:Contra_Q}]
 By definition, it is easy to show that $B$ and $B_{\Hc_{\epsilon}}$ map $\{f\in\R^{\Cc}:\|f\|_\infty\leq V_{\max}\}$ to itself, $B_\epsilon$ and $\widehat{B}_\epsilon$ map $\{f\in\R^{\Cc_\epsilon}:\|f\|_\infty\leq V_{\max}\}$ to itself, and $T$ maps $\{f\in\R^{\Pc(\Xc)}:\|f\|_\infty\leq V_{\max}\}$ to itself.
    
    For $B_\epsilon$, we have
    \begin{eqnarray*}
        \|B_\epsilon q_1-B_\epsilon q_2\|_\infty
        &\leq & \gamma \max_{c\in\Cc_\epsilon} \max_{\tilde{h}\in\Hc_{\epsilon}}|\Gamma_K q_1(\Phi(c),\tilde{h})-\Gamma_K q_2(\Phi(c),\tilde{h})|\\
        &\leq &\gamma \max_{c\in\Cc_\epsilon} \max_{\tilde{h}\in\Hc_{\epsilon}} \sum_{i=1}^{N_\epsilon}K(c^i, (\Phi(c),\tilde{h}))|q_1(c^i)-q_2(c^i)|\leq \gamma \|q_1-q_2\|_\infty,
    \end{eqnarray*}
    where we use (\ref{equ:kernel_cond}) for the property of kernel function $K(c^i, c)$.

    Therefore, $B_\epsilon$ is a contraction mapping with modulus $\gamma<1$ under the sup norm on $\{f\in\R^{\Cc_\epsilon}:\|f\|_\infty\leq V_{\max}\}$. By Banach Fixed Point Theorem, the statement for $B_\epsilon$ holds. Similar arguments prove the statements for the other four operators.
\end{proof}

\begin{proof}[Proof of Lemma \ref{lemma:shenqi_Q}]
    Using the same DPP argument as in Theorem \ref{thm:MKC=MDP}, we can show the  value function for \eqref{mfc_objective_dis}-\eqref{mfc_dynamics_dis} is a fixed point for $T$ \eqref{equ:Bellman_tildeV} in $\{f\in\R^{\Pc(\Xc)}:\|f\|_\infty\leq V_{\max}\}$. By Lemma \ref{lemma:Contra_Q}, it coincides with $V_{\Hc_\epsilon}$.
    
    To prove \eqref{equ:shenqi_Q}, recall from Lemma \ref{lemma:Contra_Q} that $T$ is a contraction mapping with 
    modulus $\gamma$ with the supremum norm on $\{f\in\R^{\Pc(\Xc)}:\|f\|_\infty\leq V_{\max}\}$, with a
    fixed point $V_{\Hc_\epsilon}$ which is the value function of the MFC  \reff{mfc_objective_dis}-\reff{mfc_dynamics_dis}, i.e.,   \reff{mfc_objective_1} with the action space restricted to $\Hc_\epsilon$. Moreover, define $\tilde{Q}(\mu, h):=r(\mu, h)+\gamma V_{\Hc_\epsilon}(\Phi(\mu, h))$.
    Then
    \begin{eqnarray*}
        \tilde{Q}(\mu, h)&=&\,r(\mu, h)+\gamma V_{\Hc_\epsilon}(\Phi(\mu, h))\\
        &=&\,r(\mu, h)+\gamma \max_{\tilde{h}\in\Hc_\epsilon}(r(\Phi(\mu, h),\tilde{h})+\gamma V_{\Hc_\epsilon}(\Phi(\Phi(\mu, h),\tilde{h})))\\
        &=&\,r(\mu, h)+\gamma \max_{\tilde{h}\in\Hc_\epsilon}\tilde{Q}(\Phi(\mu, h), \tilde h).
    \end{eqnarray*}
    So $\tilde{Q}\in\{f\in\R^{\Cc}:\|f\|_\infty\leq V_{\max}\}$ is a fixed point of $B_{\Hc_{\epsilon}}$. By Lemma \ref{lemma:Contra_Q}, $\tilde{Q}=Q_{\Hc_\epsilon}$.
    
    Now, since $Q_{\Hc_\epsilon}$ is  the value function of the MFC problem \eqref{mfc_objective_dis}, replacing $Q$ with $Q_{\Hc_\epsilon}$ in the argument of Lemma \ref{lemma:exist_policy} and then taking $\epsilon\to 0$  yield the Lipschitz continuity of $Q_{\Hc_\epsilon}$.
\end{proof}

\begin{proof}[Proof of Lemma~\ref{lemma:covering}]
    By Markov's inequality,
    \begin{eqnarray*}
        \P(T_{\Cc,\pi}>eT)\leq\frac{\E[T_{\Cc,\pi}]}{eT}\leq\frac{1}{e}.
    \end{eqnarray*}
    Since $T_{\Cc,\pi}$ is independent of the initial state and the dynamics are Markovian,
     the probability that $\Cc_\epsilon$ has not been covered
    during any time period with length $eT$ is less or equal to $\frac{1}{e}$.
    Therefore, for any positive integer $k$, $\P(T_{\Cc,\pi}>e k T)\leq\frac{1}{e^k}$. Take $k=\log(1/\delta)$ and we get the desired result.
\end{proof}

\begin{proof}[Proof of Corollary \ref{corr:optimal_value_app}]
    From \eqref{equvalueappro}, we have for any $\mu^N =\frac{\sum_{j=1}^N 1({x^{j, N}} = x)}{N}$,
    \begin{eqnarray*}
&& v^{\pi^*} (\mu^N) - \frac{C}{\sqrt{N}}  \leq u_N^{\pi^*}(\mu^N)    \leq v^{\pi^*} (\mu^N)+ \frac{C}{\sqrt{N}},\,\,v^{\widetilde{\pi}} (\mu^N) - \frac{C}{\sqrt{N}}  \leq u_N^{\widetilde{\pi}}(\mu^N)    \leq v^{\widetilde{\pi}} (\mu^N)+ \frac{C}{\sqrt{N}}.\\
    \end{eqnarray*}
By the optimality condition, we have $v^{\widetilde{\pi}} (\mu^N) \leq v^{\pi^*} (\mu^N)$. Hence
    \begin{eqnarray}\label{ineq1}
  u_N^{\widetilde{\pi}}(\mu^N)    \leq v^{\widetilde{\pi}} (\mu^N)+ \frac{C}{\sqrt{N}} \leq v^{\pi^*} (\mu^N) + \frac{C}{\sqrt{N}} .
    \end{eqnarray}
    Similarly since $u^{\widetilde{\pi}} (\mu^N) \geq u^{\pi^*} (\mu^N)$, we have
     \begin{eqnarray}\label{ineq2}
  v_N^{\pi^*}(\mu^N)    \leq u^{\pi^*} (\mu^N)+ \frac{C}{\sqrt{N}} \leq u^{\widetilde{\pi}} (\mu^N) + \frac{C}{\sqrt{N}} .
    \end{eqnarray}
    Combining \eqref{ineq1} and \eqref{ineq2} leads to the desired result.
\end{proof}

\end{document}